\def\eqref#1{equation~\ref{#1}}
\def\1{\bm{1}}
\def\mA{{\bm{A}}}
\def\mK{{\bm{K}}}
\def\mO{{\bm{O}}}
\def\mQ{{\bm{Q}}}
\def\mS{{\bm{S}}}
\def\mV{{\bm{V}}}
\def\mW{{\bm{W}}}
\def\mX{{\bm{X}}}
\DeclareMathAlphabet{\mathsfit}{\encodingdefault}{\sfdefault}{m}{sl}
\SetMathAlphabet{\mathsfit}{bold}{\encodingdefault}{\sfdefault}{bx}{n}
\definecolor{p}{RGB}{200,0,200}
\useunder{\uline}{\ul}{}
\definecolor{baselinecolor}{gray}{.9}
\newcommand{\baseline}[1]{\cellcolor{baselinecolor}{#1}}
\newcommand\LHS{\mathrm{LHS}}
\newcommand\RHS{\mathrm{RHS}}
\renewcommand\paragraph{\@startsection{paragraph}{4}{\z@}
  {.5em \@plus1ex \@minus.2ex}{-.5em}{\normalfont\normalsize\bfseries}}
\def\@copyrightspace{\relax}  
\theoremstyle{plain}
\newtheorem{theorem}{Theorem}[section]
\newtheorem{lemma}[theorem]{Lemma}
\theoremstyle{definition}
\theoremstyle{remark}
\icmltitlerunning{SpikeZIP-TF: Conversion is All You Need for Transformer-based SNN}
\begin{document}

\twocolumn[
\icmltitle{SpikeZIP-TF: Conversion is All You Need for Transformer-based SNN}




\icmlsetsymbol{equal}{*}

\begin{icmlauthorlist}
\icmlauthor{Kang You}{equal,sjtu}
\icmlauthor{Zekai Xu}{equal,sjtu}
\icmlauthor{Chen Nie}{sjtu}
\icmlauthor{Zhijie Deng}{sjtu}
\icmlauthor{Xiang Wang}{comp}
\icmlauthor{Qinghai Guo}{comp}
\icmlauthor{Zhezhi He}{sjtu}
\end{icmlauthorlist}

\icmlaffiliation{sjtu}{School of Electronic Information and Electrical Engineering, Shanghai Jiao Tong University, Shanghai, China}
\icmlaffiliation{comp}{Huawei Technologies, Shenzhen, China}

\icmlcorrespondingauthor{Zhezhi He}{zhezhi.he@sjtu.edu.cn}

\icmlkeywords{Machine Learning, ICML}

\vskip 0.3in
]



\printAffiliationsAndNotice{\icmlEqualContribution} 

\begin{abstract}
Brain-inspired spiking neural network (SNN) has attracted great attention because of its high efficiency over the traditional artificial neural network (ANN). 
Currently, ANN to SNN conversion methods can produce SNNs using convolution neural network as backbone architecture which achieves on-par accuracy w.r.t ANNs with ultra-low latency (\textit{e.g.}, 8 time-steps) on computer vision (CV) tasks.
Although Transformer-based networks have achieved the prevailing precision on both CV and natural language processing (NLP) tasks, Transformer-based SNNs are still lagging behind their ANN counterparts.
In this work, we introduce a novel ANN-to-SNN conversion method, called SpikeZIP-TF, through which ANN and the converted SNN are exactly equivalent thus incurring no accuracy degradation. 
SpikeZIP-TF achieves 83.82\% Top-1 accuracy on the CV image classification task with ImageNet dataset and 93.79\% accuracy on the NLP dataset (SST-2), which both are higher than SOTA Transformer-based SNNs. 
The code is publicly available at: \href{https://github.com/Intelligent-Computing-Research-Group/SpikeZIP-TF}{https://github.com/Intelligent-Computing-Research-Group/SpikeZIP-TF}
\end{abstract}

\section{Introduction}

Spiking neural network (SNN) \cite{maass1997networks} is a type of biologically plausible neural network inspired by brains of living organisms. Unlike modern ANNs \cite{lecun2015deep} using continuous activation value to propagate information between neurons synchronously, SNNs utilize discrete events or ``spikes" for asynchronous neuron-to-neuron communication and processing \cite{merolla2014million, davies2018loihi}.
Meanwhile, in the field of deep learning, Transformers \cite{vaswani2017attention} have made significant strides and revolutionized various applications. Inspired by the architecture of the ANN Transformer, introducing the Transformer structure to SNN to improve the SNN accuracy is an emerging trend \cite{zhou2022spikformer,lv2023spikebert}. 

Currently, methods to train Transformer-based SNN come in twofold: \textit{directly training (DT)} and \textit{ANN-to-SNN Conversion (A2S)} \cite{roy2019towards}.
The DT methods leverage back-propagation through time (BPTT) to update the synaptic weights of SNN. Unfortunately, due to the inaccurate gradient approximation \cite{neftci2019surrogate} for the non-differential SNN neuron, \textit{e.g.}, integrate and fire (IF) neuron, an accuracy gap persists between SNN and its ANN counterpart \cite{zhou2024spikformer,lv2023spikebert,zhou2023spikingformer}. 

\begin{figure}[t]
\centering
\includegraphics[width=\linewidth]{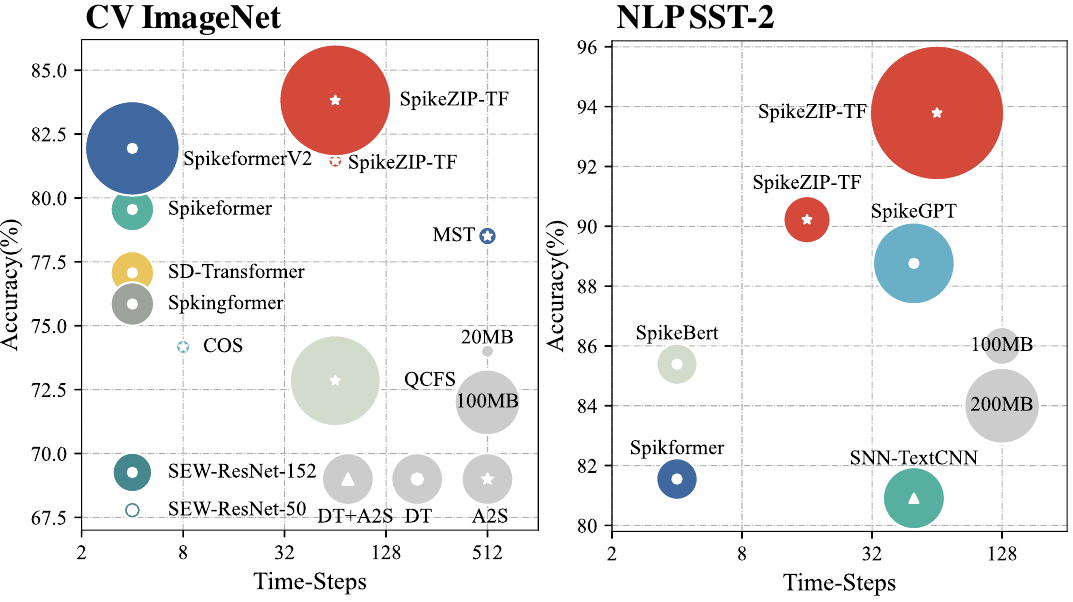}
\caption{\textbf{Comparison of Transformer-based SNNs.} The markers, represented by circles, star, and triangle shapes, denote the direct learning (DT) method, ANN-to-SNN (A2S) conversion method and using both the DT and A2S methods, respectively, where the area of the scatter corresponds to the model size. Results show that the pikeZIP-TF generated SNN achieves higher accuracy with greater model size than the other recent SNNs. 
The largest model size of SpikeZIP-TF on ImageNet is 304.33 MB.
}  
\label{over_page}
\end{figure}

Rather than directly training an SNN, the A2S methods transfer the parameters of the pre-trained ANN to its SNN counterpart \cite{cao2015spiking} that yields close-to-ANN accuracy. The previous A2S algorithm achieves the on-par accuracy to ANN with ultra-low latency (\textit{e.g.}, 8 time-steps) on convolution-based SNN \cite{hu2023fast}.
However, when leveraging the existing A2S algorithms to produce Transformer-based SNNs, it is difficult to build the equivalence between SNN operators and ANN operators, like softmax, layer normalization \cite{ba2016layer}, and attention (\textit{i.e.}, dot product with two non-stationary matrix).
The inequivalence hinders the development of A2S algorithm for Transformer-based SNN.
Correspondingly, we propose SpikeZIP Transformer (\textit{aka.} SpikeZIP-TF), which introduces the spike-equivalent self-attention (SESA), Spike-Softmax and Spike-LayerNorm, while maintaining the equivalence between the operators of ANN and SNN. 
\cref{over_page} shows the overall results of SpikeZIP-TF on CV (ImageNet) and NLP task (SST-2). 

Contributions of SpikeZIP-TF are summarized as follows:
\begin{itemize}
    \item We propose an ANN-SNN conversion method called SpikeZIP-TF that builds the equivalence between the activation-quantized Transformer-based ANN and SNN by supporting the SNN-unfriendly operators of ANN (\textit{e.g.}, softmax and layernorm) in converted SNN.
    
    \item SpikeZIP-TF deals with both the CV and NLP tasks by converting the quantized vision Transformer (ViT) \cite{dosovitskiy2020image} and Roberta \cite{devlin2018bert} to SNN and achieves the state-of-the-art accuracy than competing Transformer-based SNNs.
\end{itemize}

\section{Background and Related Works}

\paragraph{Spiking Neurons.}
Integrate and fire (IF) neuron is widely utilized in the A2S methods due to the mathematical similarity between the IF neuron and ReLU \cite{bu2023optimal}. 
Nevertheless, the error in the accumulated output of the IF neuron to the ReLU persists, which hampers the accuracy of SNNs. 
To deal with the error, a recently emerged SNN neuron, which we name it as bipolar integrate and fire with spike tracer (ST-BIF) neuron, are introduced to further approaching the equivalence to the quantized-ReLU (Q-ReLU) function \cite{li2022quantization,hu2023fast}. 
Unfortunately, the native ST-BIF neuron is merely equivalent to the quantized-ReLU function rather than the quantized function for activation in attention which is widely used in self-attention in Transformer.

\begin{figure}[t]
\centering
\includegraphics[width=\linewidth]{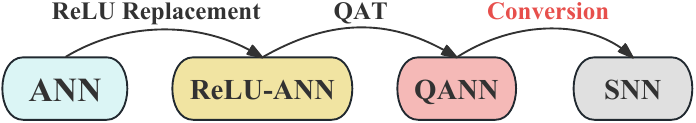}
\caption{The conversion pipeline of SpikeZIP-TF.} 
\label{conversion_pipline}
\end{figure}

\paragraph{Learning Methods of SNN}
comes in twofolds: direct training (DT) and ANN-to-SNN conversion (A2S). The DT algorithm employs the back-propagation through time (BPTT) \cite{lee2016training,shrestha2018slayer} with surrogate gradient \cite{neftci2019surrogate} to train an SNN for a fixed time-step. However, due to the errors of esitamted gradient during training, a loss gap exists between the SNN obtained through DT and its ANN counterpart. 
In the realm of A2S algorithm, the ReLU/Q-ReLU layers in ANN are substituted with spiking neurons, resulting in an equivalent SNN model that achieves comparable accuracy to the ANN \cite{bu2023optimal,hu2023fast,li2022quantization,hao2023bridging,wang2023masked,rueckauer2017conversion}. 
Compared to DT, A2S-based SNNs not only achieve higher accuracy, but also consume lower training cost in terms of time and memory. 
Such characteristic makes the SNN more amenable to model scaling and deployment on neuromorphic hardware.

As depicted in \cref{conversion_pipline}, SpikeZIP-TF adheres to the conversion pipeline established by prior A2S methods \cite{bu2023optimal,li2022quantization,hu2023fast}. 
The process initiates with the replacement of activation functions in the ANN to ensure that only ReLUs are present. 
Subsequently, quantization-aware training (QAT) \cite{gholami2022survey,he2019simultaneously} is applied to acquire a low bit-width and high-accuracy QANN.
Finally, the QANN undergoes conversion to an SNN by substituting the ReLU activation(neuron) with specific spiking neuron, without accuracy degradation.

\begin{table}[t]
\centering
\caption{\textbf{Summary of Transformer-based SNNs.} SSA: spike self-attention; SDSA: spike-driven self-attention; ASR-SA: average spiking rate self-attention; S-RWKV denotes spiking-RWKV; VSA: vanilla self-attetion; QVAS: quantized vanilla self-attetion.}
\label{related_work}
\resizebox{\linewidth}{!}{
\begin{tabular}{@{}rcccccc@{}}
\toprule
Methods & Algorithm & Neuron & Attention & NAS & Pretrain & Distill \\ \midrule
SpikformerV1 & BPTT & LIF & SSA &  &  &  \\
SD-Transformer & BPTT & LIF & SDSA &  &  &  \\
Spikingformer & BPTT & LIF & SSA &  &  &  \\
Auto-Spikformer & BPTT & LIF & SSA & $\checkmark$ &  &  \\
SPIKEBERT & BPTT & LIF & SSA &  &  & $\checkmark$ \\
SpikingBERT & Implicit Diff & LIF & ASR-SA &  &  & $\checkmark$ \\
SpikeGPT & BPTT & LIF & S-RWKV &  & $\checkmark$ &  \\
SpikformerV2 & BPTT & LIF & SSA &  & $\checkmark$ &  \\
MST & Conversion & IF & VSA &  & $\checkmark$ &  \\
\bottomrule
\end{tabular}
}

\end{table}

\paragraph{Transformer-based SNNs.}
The Transformer-based ANN \cite{vaswani2017attention,dosovitskiy2020image} comprises three key components: 1) an embedding layer designed to convert image patches or words in sentences to tokens for enhanced learning; 2) cascaded Transformer encoders, incorporating several self-attention and multi-layer perceptron blocks, aimed at learning spatial-temporal features within the tokens; 3) shallow head responsible for executing specific tasks. 
Notably, the Transformer-based ANNs attain state-of-the-art accuracy in both the CV and NLP tasks, thereby catalyzing the advancement of Transformer-based SNNs, as tabulated in \cref{related_work}. 

Transformer-based SNNs are initially pursued through DT methods, such as Spikformer v1/v2 \cite{zhou2022spikformer,zhou2024spikformer}, SD-Transformer \cite{yao2023spike} and SpikeGPT \cite{zhu2023spikegpt}. 
Spikingformer \cite{zhou2023spikingformer} tackles \textit{non-spike computation} challenges by swapping the positions of convolution and batch normalization. 
To further improve the accuracy of the SNN, SPIKEBERT \cite{lv2023spikebert} and SpikingBERT \cite{bal2023spikingbert} employ knowledge distillation algorithms to transfer information from ANN.
Furthermore, SpikeGPT \cite{zhu2023spikegpt} and spikformer V2 \cite{zhou2024spikformer} leverage the pre-train algorithm for better-initiated model weights, while Auto-Spikformer \cite{che2023auto} leverages the network architecture search (NAS) to identify a spiking Transformer with high accuracy and low energy consumption. 
Besides, SpikingBERT \cite{bal2023spikingbert} adopts an implicit differentiation algorithm, distinct from previous works using BPTT, for SNN training. 
In contrast, for A2S methods, MST \cite{wang2023masked} replaces the QCFS \cite{bu2023optimal} activation function to ReLU function in ANN and converts the Transformer-based ANN to SNN.

\begin{figure*}[t]
\centering
\includegraphics[width=\linewidth]{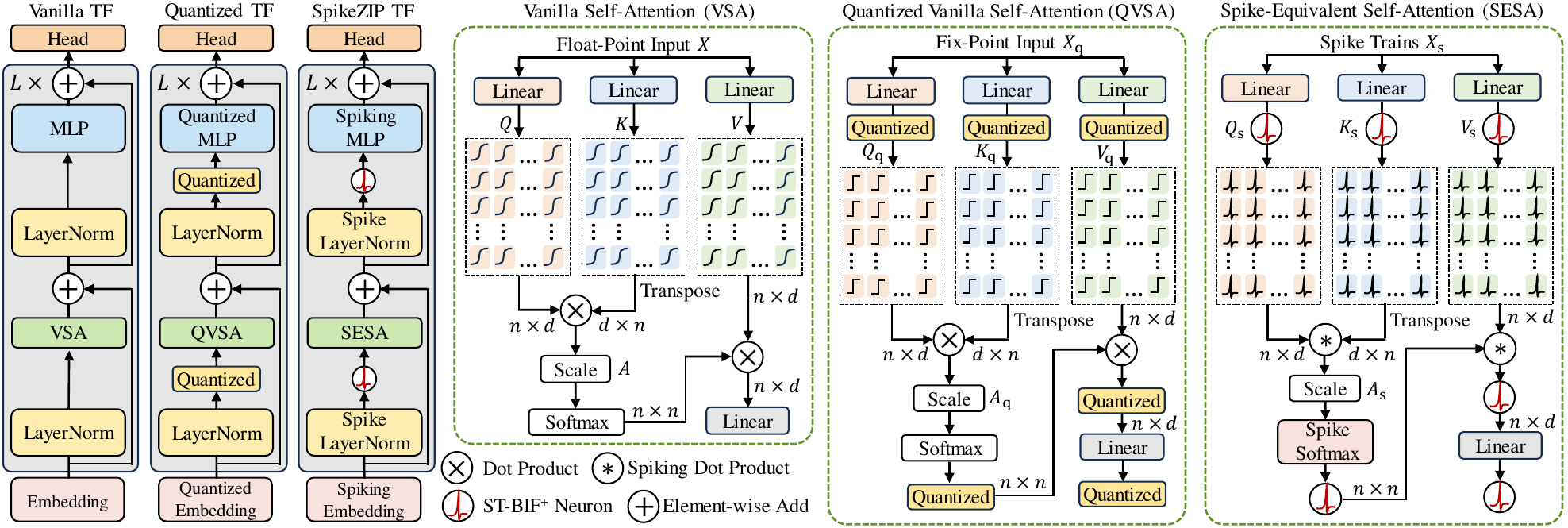}
\caption{\textbf{Architecture of Transformer-based SNN in SpikeZIP-TF.} Compared to the vanilla Transformer, SpikeZIP-TF inserts the ST-BIF\textsuperscript{+} neuron ahead of and behind the matrix multiplication operations and substitutes SNN-unfriendly operators (dot product, Softmax and LayerNorm) with SNN-friendly ones (spiking dot product, Spike-Softmax and Spike-LayerNorm). TF: Transformer; $n$: sequence length; $d$: token dimension; $\{\mQ,\mK,\mV,\mA\}$, $\{\mQ_{\textrm{q}},\mK_{\textrm{q}},\mV_{\textrm{q}},\mA_{\textrm{q}}\}$, $\{\mQ_{\textrm{s}},\mK_{\textrm{s}},\mV_{\textrm{s}},\mA_{\textrm{s}}\}$: \{query, key, value, attention array\}, \{theirs quantized form\} and \{spike form\}.}
\label{fig:spikezip_arch}
\end{figure*}

\emph{The limited adoption of A2S methods in Transformer-based SNNs stems from the challenge of establishing mathematical equivalence between operators in quantized Transformer-based ANNs and SNNs.} Existing A2S methods have yet to tackle the equivalence issues associated with the following operators: \emph{self-attention, softmax, and layer normalization} \cite{ba2016layer}.
In SpikeZIP-TF, we address the operator equivalence challenge by introducing a novel spiking equivalence self-attention (\textit{aka}. SESA).
Additionally, for softmax and layer-norm, we employ a differential algorithm to design their equivalent spiking forms.
By integrating our spiking operators, SpikeZIP-TF establishes equivalence between quantized Transformer-based ANNs and SNNs.

\section{Methods}

\subsection{Dynamics of ST-BIF\textsuperscript{+} Neuron}
\par In SpikeZIP-TF, to address the inequivalence between ST-BIF neuron and quantized function in Transformer, we propose the ST-BIF\textsuperscript{+} whose dynamics can be expressed as:
\begin{equation}
\label{eqt:ST-BIF}
\begin{gathered}
V_t = 
V_{t-1} + V^\textrm{in}_t - V_\textrm{thr} \cdot \Theta(V_{t-1} + V^\textrm{in}_t, V_\textrm{thr}, S_{t-1}) \\
S_t = S_{t-1} + \Theta(V_{t-1} + V^\textrm{in}_t, V_\textrm{thr}, S_{t-1}) \\
\Theta(V, V_\textrm{thr}, S) = 
\begin{cases}
1 ;& V \geq V_\textrm{thr} ~ \& ~ S < S_{\textrm{max}} \\
0 ;&  \textrm{other} \\
-1 ;&  V < 0 ~ \& ~ S > S_{\textrm{min}}
\end{cases}
\end{gathered}
\end{equation}
where the notations are specified in \cref{tab:notation_table}. Compared to the ST-BIF neuron, the ST-BIF\textsuperscript{+} neuron expands the minimum value of the spike tracer from zero to the lower clamp bound in the quantized function as follows: 
\begin{equation}
\label{eqt:qrelu_mp}
{\rm Quantize}(x) = s \cdot \textrm{clamp}(\textrm{round}(x/s),\alpha,\beta)    
\end{equation}
where $s$ represents the quantization scale size, $\alpha, \beta$ represent the minimum and maximum of clamp range in the quantizer. By setting $V_{\rm thr}=s, S_{\rm min} = \alpha, S_{\rm max} = \beta$, the accumulated output of ST-BIF\textsuperscript{+} is equivalence to the quantized function.

\begin{table}[t]
\centering
\caption{Summary of mathematical notations used in this paper.}
\label{tab:notation_table}
\resizebox{\linewidth}{!}{
\renewcommand{\arraystretch}{1.0}
\begin{tabular}{cl}
\toprule
Notation & \multicolumn{1}{c}{Description} \\ 
\midrule
$V_t$ & \begin{tabular}[c]{@{}l@{}}potential of neuron membrane at time-step $t$\end{tabular} \\
$V_{\textrm{thr}}$ & \begin{tabular}[c]{@{}l@{}}threshold voltage for neuron to fire a spike\end{tabular} \\
$V^{\textrm{in}},V^{\textrm{out}}$ & \begin{tabular}[c]{@{}l@{}}input or output voltage of neuron \end{tabular} \\
$S_{t}$ & \begin{tabular}[c]{@{}l@{}}spike tracer at time-step $t$\end{tabular} \\
$S_{\textrm{max}}, S_{\textrm{min}}$ & \begin{tabular}[c]{@{}l@{}}maximum and minimum bound of spike tracer \end{tabular} \\
$\textrm{clip}(x,\alpha_\textrm{min},\alpha_\textrm{max})$ & \begin{tabular}[c]{@{}l@{}}clip function that limits $x$ within  $\alpha_\textrm{min}$ and $\alpha_\textrm{max}$\end{tabular} \\
$\Theta(V,V_{\textrm{thr}},S)$ & output spike decision function of ST-BIF\textsuperscript{+} \\
$T_{\textrm{eq}}$ & time-step of SNN enters the equilibrium state \\
\bottomrule
\end{tabular}
}
\end{table}

\begin{figure*}[t]
\centering  
\subfigbottomskip=2pt 
\subfigcapskip=-5pt 
\subfigure[Activation-Weight Multiplication]{
    \includegraphics[width=0.28\linewidth]{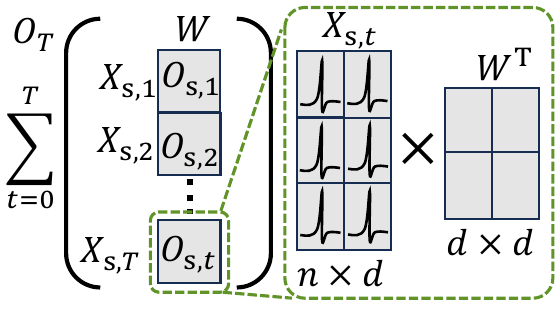}}
\subfigure[Activation-Activation Multiplication]{
    \includegraphics[width=0.68\linewidth]{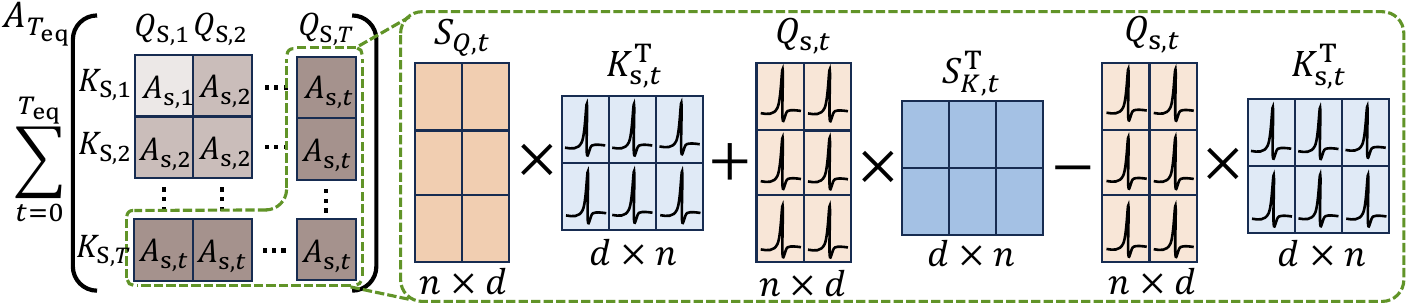}}
\caption{\textbf{The process of matrix multiplication in SESA.} The bracket part in (a) and (b) corresponds to \cref{eqt:multi1} and  \cref{eqt:multi2} respectively. 
(a) $\mX_{\textrm{s},t}, \mO_{\textrm{s},t}$ represent the input and output spike trains in SNN at time-step $t$. 
(b) $\mQ_{\textrm{s},t}, \mK_{\textrm{s},t}$ denote the query and key in SNN at time-step $t$; $\mS_{\textrm{Q},t}, \mS_{\textrm{K},t}$ are the spike tracers in the neuron layers, which store the accumulated output for query and key. At each time-step, we utilize the accumulated output in the spike tracer to perform AA multiplication via three matrix multiplications. }
 \label{fig:matrix_precess}
\end{figure*}

\subsection{Transformer-based SNN in SpikeZIP-TF}
In this section, we first elaborate on the network topology of Transformer-based SNN in SpikeZIP-TF. 
Then, for equivalent ANN-SNN conversion, we introduce SNN-friendly operators in SpikeZIP-TF including SESA, Spike-Softmax and Spike-Layernorm.

\subsubsection{Architecture Overview}

As shown in \cref{fig:spikezip_arch}, the architecture of Transformer-based SNN generated by SpikeZIP-TF is almost identical to the vanilla Transformer, which consists of an embedded layer, a Transformer encoder, and a shallow head. 
Given a target Transformer-based ANN and to obtain its SNN counterpart, we coduct the following procedures:
\begin{enumerate}
    \item Quantizers are inserted both ahead of and behind the matrix multiplication operators to acquire a quantized Transformer, which is consistent with prior quantized Transformer work, \textit{e.g.}, I-BERT \cite{kim2021bert};
   
    \item Once the model with quantization function is trained with QAT, quantized functions are replaced with ST-BIF\textsuperscript{+} neuron to ensure the inputs and outputs of matrix multiplication are in the form of spike trains;
    
    \item SNN-unfriendly operators in the quantized Transformer (\textit{e.g.}, Softmax, LayerNorm and dot product) are subsituted with SNN-friendly operators (\textit{e.g.}, Spike-Softmax, Spike-LayerNorm and spiking dot product).
\end{enumerate}

\subsubsection{Embedding \& Head for SNN}
Compared to the embedding layer in QANN, an ST-BIF\textsuperscript{+} neuron layer is introduced following the embedding layer to facilitate the conversion of analog input charge into spike trains for Transformer encoding.
For the head of SNN, following the previous SNN works \cite{wang2023masked,zhou2024spikformer,yao2023spike}, we use membrane potential in the ST-BIF$^+$ rather than spike trains as the output.

\subsubsection{Spike-Equivalent Self-Attention (SESA)}

As a pivotal component within the SpikeZIP-TF, the design of spike-equivalent self-attention (SESA) adheres to two fundamental principles: 1) ensuring that the accumulated output remains \emph{equivalent to quantized vanilla self-attention}, and 2) \emph{aligning with the computing paradigm in SNN}.

SESA is implemented as described in \cref{fig:spikezip_arch}. 
There are two types of activation matrix multiplication in SESA, \textit{i.e.}, 1) \textit{Activation-Weight} (\textit{aka}. AW) multiplication in the linear layer, where one operand is stationary while the other is dynamically generated; 2) \textit{Activation-Activation} (\textit{aka}. AA) multiplication, where both operands are generated on-the-fly. It occurs between the query $\mQ$ and key $\mK$, as well as attention array $\mA= \mQ\mK$ and value $\mV$. The AA multiplication is also presented as the spiking dot product in \cref{fig:spikezip_arch}, \textit{i.e.}, both tensors are composed by spikes.

\paragraph{AW Multiplication.} 
Thanks to the equivalence between ST-BIF\textsuperscript{+} and the quantized function, we can conclude that:
\begin{equation}
\begin{aligned}
    \mO_{T_{\textrm{eq}}} &= \mW \cdot \mX_{\textrm{q}} = \sum_{t=0}^{T_{\textrm{eq}}} (\mW \cdot  \mX_{\textrm{s},t}); x_{\textrm{s},t} \in \{0,\pm1\} 
    \label{eqt:multi1}
\end{aligned}
\end{equation}
where $\mW$ denotes the weight of linear layer, while other notations are summarized in \cref{tab:notation_table} and \cref{fig:spikezip_arch}.
Note that, when $t > T_{\textrm{eq}}$, all neurons enter the \emph{equilibirum state}, during which they neither receive nor fire any spike. 

\paragraph{AA Multiplication.}
Taking the multiplication between query and key as an example, it can be written as:
\begin{equation}
\begin{aligned}
\label{eqt:multi2}
\mA_{T_{\textrm{eq}}} &= \mQ_{\textrm{q}} \cdot \mK_{\textrm{q}} = \sum_{t_1=0}^{T_{\textrm{eq}}} \mQ_{\textrm{s},t_1} \cdot \sum_{t_2=0}^{T_{\textrm{eq}}} \mK_{\textrm{s},t_2} \\
&= \sum_{t=0}^{T_{\rm eq}} \mS_{{\rm Q},t} \cdot \mK_{{\rm s},t}^T +  \mQ_{{\rm s},t} \cdot \mS_{{\rm K},t}^T - \mQ_{{\rm s},t} \cdot \mK_{{\rm s},t}^T
\end{aligned}
\end{equation}
where $\mQ_{\textrm{s},t_1}, \mK_{\textrm{s},t_2}$ represent the spike trains of query and key at time-step $t_1$ and $t_2$ respectively, $\mS_{{\rm Q},t}, \mS_{{\rm K},t}$ represent the accumulated spike trains of query and key which are stored in ST-BIF\textsuperscript{+} neuron, while $\mA_{T_{\textrm{eq}}}$ denotes the attention array accumulated during $T_{\textrm{eq}}$.

To compute \cref{eqt:multi1,eqt:multi2} 
in SESA, the matrix multiplication must be decomposed into sub-operations for each time-step. 
To address this, we propose a novel calculation method for matrix multiplication. 
The detailed processes of AW and AA multiplication are outlined in \cref{fig:matrix_precess}. 
For AW multiplication, as depicted in \cref{fig:matrix_precess}(a), 
we perform matrix multiplication on the input spike trains per time-step to generate the output.
In contrast, for AA multiplication, we utilize the accumulated output in the spike tracer to compute the attention array through three matrix multiplications, as illustrated in \cref{fig:matrix_precess}(b). 
With the computation depicted in \cref{fig:matrix_precess}, we achieve a lossless conversion from QVSA to SESA while adhering to the computing paradigm in SNN.

\subsubsection{Spike-Softmax \& Spike-LayerNorm}

To enable Softmax and LayerNorm operations in SNN, we introduce Spike-Softmax and Spike-LayerNorm, which are equivalent to their ANN counterparts. 
The process of Spike-Softmax and Spike-LayerNorm can be expressed as: 
\begin{equation}
\begin{gathered}
\label{eqt:spiking_version}
\quad \mX_{T} = { \textstyle \sum_{t=0}^T} \mX_{\textrm{s},t}; \quad
\mO_{T} = {\rm \sigma}(\mX_{T}) \\
\mO_{\textrm{s},t} = \mO_{t} - \mO_{t-1} 
\end{gathered}
\end{equation}
where $\sigma$ represents the function of Softmax or LayerNorm. $\mX_{\textrm{s},t}$ and $\mO_{\textrm{s},t}$ are the input and output of the operator at time-step $t$ respectively, $\mX_T$ is the summation of the input during $T$ time-steps, which is stored in the operator. $\mO_T$ is the output of the function $\sigma$ with input $\mX_T$. 
The Spike-Softmax(LayerNorm) can be made equivalent to Softmax(LayerNorm) by summing up $\mO_{\textrm{s},t}$ through time. 

\subsection{Complexity Analysis}

\begin{table}[t]
\caption{\textbf{Complexity analysis of SpikeZIP-TF} for its operators of AW(AA)-Multiplication (\textit{abbr}. Mult), Spike-Softmax (SSoftmax), and Spike-LayerNorm (SLayerNorm). $n$: sequence length; $d$: token demension; $T$: \# time-steps; $\gamma$: Performance ratio of one operation in ANN versus SNN.
}
\label{tab:complex}
\resizebox{\linewidth}{!}{
\begin{tabular}{@{}rccccc@{}}
\toprule
 & \multicolumn{1}{l}{Network} & AW-Mult  & AA-Mult  & SSoftmax & SLayerNorm \\ \midrule
Spatial & \multirow{2}{*}{SNN} & $O(nd+d^2)$ & $O(nd)$ & $O(n^2)$ & $O(nd)$ \\
Temporal &  & $O(Tnd^2)$ & $O(Tnd^2)$ & $O(Tn^2)$ & $O(Tnd)$ \\ \midrule
 & \multicolumn{1}{l}{Network} & AW-Mult & AA-Mult & Softmax & LayerNorm \\ \midrule
Spatial & \multirow{2}{*}{(Q)ANN} & $O(nd+d^2)$ & $O(nd)$ & $O(1)$ & $O(1)$ \\
Temporal &  & $O(\gamma nd^2)$ & $O(\gamma nd^2)$ & $O(\gamma n^2)$ & $O(\gamma nd)$ \\ \bottomrule
\end{tabular}}

\end{table}

The spatial and temporal complexity analysis of the operations in SpikeZIP-TF is presented in \cref{tab:complex}. 
Note that, the synaptic operations in SNN are addition or subtraction for AW multiplication and binary operation for AA multiplication, while ANN is integer or floating-point multiplication \cite{horowitz20141}.
Therefore, we introduce an ratio of $\gamma >> 1$ to indicate the higher operation cost of (Q)ANN \textit{w.r.t} SNN. 
For spatial complexity, AW multiplication and AA multiplication do not bring additional cost, but Spike-Softmax and Spike-LayerNorm have $n^2$ and $nd$ times more complexity than ANN. 
It is resulted from that Spike-Softmax and Spike-LayerNorm requires extra memory to store the accmulated input.

\section{Experiments}

\subsection{Experimental Setup}

\paragraph{Vision Benchmarks.}
Various vision datasets are adopted for evaluation.
\textbf{1) static vision datasets}, including CIFAR10/100 \cite{krizhevsky2009learning} and ImageNet \cite{deng2009imagenet}.
\textbf{2) neuromorphic vision dataset}: We evaluate SpikeZIP-TF on CIFAR10-DVS~\cite{2017CIFAR10}. CIFAR10-DVS is a neuromorphic event-stream dataset with 10 distinct classes, which is created by leveraging the dynamic vision sensor~(DVS) to convert 10k frame-based images from CIFAR10 \cite{krizhevsky2009learning} dataset into 10k event streams.
For ImageNet, we apply the pre-trained Vision Transformer-Small/Base/Large (\textit{aka}. ViT-S/B/L) \cite{dosovitskiy2020image} as the source ANN. For CIFAR-10/100 and CIFAR10-DVS, we utilize the pre-trained Vision Transformer-Small (\textit{aka}. ViT-S) as the source ANN.

\paragraph{NLP Benchmarks.}
Various natural language understanding (NLU) datasets are evaluated, including English (MR \cite{pang2004sentimental}, Subj \cite{pang2004sentimental}, SST-2, SST-5 \cite{socher2013recursive}) and Chinese (ChnSenti, Waimai). 
For NLP tasks, the Roberta-Base/Large \cite{DBLP:journals/corr/abs-1907-11692} (\textit{aka}. Roberta-B/L) is chosen as source ANN owing to its high accuracy in NLP benchmarks.

\subsection{Results Comparison}

\begin{table}[h]
\centering
\caption{\textbf{Experimental results on CIFAR-10, CIFAR-100 and CIFAR10-DVS.} \textit{CF} is the abbreviation of CIFAR. The best results are in \textbf{bold}, the runner-up results are in \colorbox{baselinecolor}{gray}.}
\label{tab:cifar}
\resizebox{\linewidth}{!}{
\begin{tabular}{@{}ccccccccc@{}}
\toprule
\multirow{2}{*}{Methods} & \multirow{2}{*}{Category} & \multirow{2}{*}{Param(M)} & \multicolumn{2}{c}{\textit{CF}10-DVS} & \multicolumn{2}{c}{\textit{CF}-10} & \multicolumn{2}{c}{\textit{CF}-100} \\ 
 & & & Acc & T & Acc & T & Acc & T \\ \midrule
 ViT-S  & ANN     &   21.70   & 90.4  & 1     & 99.2 & 1 & 91.9 & 1 \\ \midrule
 QViT-S-8Level  & \multirow{3}{*}{QANN}      &   \multirow{3}{*}{21.70}   & -  & -     & 98.0 & 1 & 87.2 & 1 \\
 QViT-S-16Level  &       &   & 88.4  & 1     & 98.7 & 1 & 89.5 & 1 \\
  QViT-S-32Level  &       &   & 90.2  & 1     & - & - & - & - \\ \midrule
tdBN & \multirow{5}{*}{\begin{tabular}[c]{@{}c@{}}SNN \\ (Direct\\ Training)\end{tabular}} & / & 67.8 & 10 & 93.2 & 6 & - & - \\
ASpikformer & & 8.46 & / & / & 96.4 & 4 & 78.2 & 4 \\
Spikformer) & & 9.32 & 80.9 & 16 & 95.5 & 4 & 78.2 & 4 \\
SDformer & & / & 80.0 & 16 & 95.6 & 4 & 78.4 & 4 \\
Sformer+CML & & 9.32 & 81.4 & 16 & 96.0 & 4 & 80.4 & 4 \\ \midrule
MST & \multirow{3}{*}{\begin{tabular}[c]{@{}c@{}}SNN\\(A2S)\end{tabular}} & 27.60 & \baseline{88.1} & 512 & 97.3 & 256 & 86.9 & 256 \\
\multirow{2}{*}{\begin{tabular}[c]{@{}c@{}}SpikeZIP-TF\\(ours)\end{tabular}} & & \multirow{2}{*}{21.70} & 87.6 & 32 & \baseline{97.7} & 16 & \baseline{87.3} & 16 \\ 
 & & & \textbf{90.5} & 64 & \textbf{98.7} & 32 & \textbf{89.7} & 32 \\ 
\bottomrule
\end{tabular}
}
\end{table}

\begin{table*}[t]
\centering
\caption{\textbf{Comparison on ImageNet.} $\diamondsuit$: For ViT-S, we finetune the pretrained model from AugReg \cite{DBLP:journals/corr/abs-2106-10270} with ReLU activation to achieve corresponding ANN. $\dagger$: For ViT-B/L, we finetune the pretrained models from MAE \cite{He_2022_CVPR} with ReLU activation to achieve corresponding ANN. $\star$: LSQ \cite{DBLP:journals/corr/abs-1902-08153} quantization results of corresponding MAE pre-trained models. Our SpikeZIP-TF results are the equivalent conversion from corresponding quantization results. The prefix \textbf{Level} in architecture column means the quantization level. The best results are in \textbf{bold}, and the runner-up results are in \colorbox{baselinecolor}{gray}. }
\label{tab:imagenet}
\resizebox{.75\linewidth}{!}{
\begin{tabular}{cccccc}
\toprule
Category  & Methods             & Architecture       & Param(M) & Time-Step & Acc(\%) \\ \midrule
\multirow{6}{*}{ANN}            & T2T-ViT             & T2T-ViT-24        & 64.10  & 1     & 82.30  \\
 & PVT  & PVT-Large         & 61.40  & 1     & 81.70  \\
 & Swin Transformer        & SWIN-Base         & 88.00  & 1     & 83.50  \\
 & AugReg$\diamondsuit$  & ViT-S         & 22.05  & 1     & 82.34  \\
 & \multirow{2}{*}{MAE~(ReLU)$\dagger$} & ViT-B        & 86.57  & 1     & 83.75  \\
 & & ViT-L         & 304.33  & 1     & 85.41  \\ \midrule
\multirow{3}{*}{QANN}            & \multirow{3}{*}{LSQ$\star$}      & QViT-S-32Level      & 22.05  & 1     & 81.59  \\
 & & QViT-B-32Level       & 86.57  & 1     & 82.83  \\
 & & QViT-L-32Level      & 304.33  &  1    & 83.86  \\ \midrule
\multirow{16}{*}{\begin{tabular}[c]{@{}c@{}} SNN \\ (Direct\\ Training)\end{tabular}} & \multirow{2}{*}{TET}      & Spiking-ResNet-34     & 21.79  & 6     & 64.79  \\
 & & SEW-ResNet-34       & 21.79  & 4     & 68.00  \\
 & STBP-tdBN            & Spiking-ResNet-34     & 21.79  & 6     & 63.72  \\
 & \multirow{4}{*}{SEW ResNet}   & SEW-ResNet-34       & 21.79  & 4     & 67.04  \\
 & & SEW-ResNet-50       & 25.56  & 4     & 67.78  \\
 & & SEW-ResNet-101      & 44.55  & 4     & 68.76  \\
 & & SEW-ResNet-152      & 60.19  & 4     & 69.26  \\
 & Attention-SNN          & ResNet-104        & 78.37  & 4     & 77.08  \\
 & Spike-driven Transfromer    & Spiking Transformer-8-768 & 66.34  & 4     & 77.07  \\
 & Spikingformer          & Spiking Transformer-8-768 & 66.34  & 4     & 75.85  \\
 & CML  & Spiking Transformer-8-768 & 66.34  & 4     & 77.64  \\
 & \multirow{2}{*}{Spikformer}   & Spikformer-6-512     & 23.37  & 4     & 77.26  \\
 & & Spikformer-8-768     & 66.34  & 4     & 79.55  \\
 & \multirow{3}{*}{Spikformer V2} & Spikformer V2-8-384    & 29.11  & 4     & 78.80  \\
 & & Spikformer V2-8-512    & 51.55  & 4     & 80.38  \\
 & & Spikformer V2-16-768   & 172.70  & 4     & 82.35  \\ \midrule
\multirow{10}{*}{ \begin{tabular}[c]{@{}c@{}}SNN\\(A2S)\end{tabular} }        & Hybrid training        & ResNet-34         & 21.79  & 250    & 61.48  \\
 & \multirow{2}{*}{Spiking ResNet} & ResNet-34         & 21.79  & 350    & 71.61  \\
 & & ResNet-50         & 25.56  & 350    & 72.75  \\
 & QCFS & VGG-16          & 138.42  & 64    & 72.85  \\
 & Fast-SNN            & VGG-16          & 138.42  & 7     & 72.95  \\
 & COS  & ResNet-34         & 21.79  & 8     & 74.17  \\
 & MST  & Swin-T (BN)        & 28.5   & 512    & 78.51  \\
 & \multirow{3}{*}{\begin{tabular}[c]{@{}c@{}}SpikeZIP-TF\\(ours)\end{tabular}}      & SViT-S-32Level      & 22.05  & 64    & 81.45  \\
 & & SViT-B-32Level       & 86.57  & 64    & \baseline{82.71}  \\
 & & SViT-L-32Level      & 304.33  & 64    & \textbf{83.82}  \\ \bottomrule
\end{tabular}
}
\end{table*}

\paragraph{Comparison on CIFAR-10/100} of SpikeZIP-TF and previous methods are elaborated in~\cref{tab:cifar}, revealing SpikeZIP-TF's superiority over prior approaches across both CIFAR-10 and CIFAR-100 datasets. 
Notably, with ViT-S as the backbone, SpikeZIP-TF surpasses MST \cite{wang2023masked} by 1.4\% on CIFAR-10 and 2.8\% on CIFAR-100 with 8$\times$ less time-steps.
Compared with direct training methods, SpikeZIP-TF exhibits a 2.7\% and 9.3\% improvement over Spikingformer+CML on CIFAR-10 and CIFAR-100, respectively.

\paragraph{Comparison on CIFAR10-DVS} is reported in \cref{tab:cifar} as well. To expedite the training convergence via leveraging the pre-trained weights, we adopt the pre-processing approach outlined in \cite{wang2023masked}. 
This involves adding an additional reduction layer to reduce the channel dimension of neuromorphic data to 3. The experimental results in \cref{tab:cifar} underscore the effectiveness of SpikeZIP-TF in processing neuromorphic datasets. 
Compared to A2S-based MST~\cite{wang2023masked}, SpikeZIP-TF achieves a 2.4\% higher accuracy with fewer time-steps.
Despite the lower time-step requirement of the previous SOTA direct training method~(Spikingformer+CML), SpikeZIP-TF delivers a remarkable 9.1\% accuracy boost on CIFAR10-DVS. 

\begin{table*}[t]
\centering
\caption{\textbf{Comparison on NLU datasets.} The source ANN of SpikeZIP-TF with 125M param and 355M param are Roberta-B and Roberta-L. Cat. is short for Catogery. $\ddag$: results taken from the SpikeBERT \cite{lv2023spikebert}. 
$\dagger$: Results of Roberta with ReLU activation. $\star$: LSQ quantization results of corresponding Roberta pre-trained models. The best results are in \textbf{bold}, the runner-up results are in \colorbox{baselinecolor}{gray}. }
\label{tab:NLU}
\resizebox{.9\linewidth}{!}{
\begin{tabular}{@{}ccccccclccclc@{}}
\toprule
\multirow{2}{*}{Methods} & \multirow{2}{*}{\begin{tabular}[c]{@{}c@{}}Param\\ (M)\end{tabular}} & \multirow{2}{*}{Cat.} & \multicolumn{5}{c}{English Dataset} & \multicolumn{1}{c}{\multirow{2}{*}{Avg.}} & \multicolumn{3}{c}{Chinese Dataset} & \multirow{2}{*}{Avg.} \\ \cmidrule(lr){4-8} \cmidrule(lr){10-12}
 & & & MR & SST-2 & Subj & SST-5 & T. & \multicolumn{1}{c}{} & ChnSenti & Waimai & T. & \\ \midrule
TextCNN & n/a & \multirow{3}{*}{ANN} & 77.41 & 83.25 & 94.00 & 45.48 & 1 & 75.04 & 86.74 & 88.49 & 1 & 87.62 \\
Roberta-B$\dagger$ & 125 & & 87.16 & 94.15 & 96.30 & 54.57 & 1 & 83.05 & 88.22 & 92.05 & 1 & 90.14 \\ 
Roberta-L$\dagger$ & 355 & & 91.33 & 96.21 & 97.25 & 57.42 & 1 & 85.55 & 86.90 & 92.91 & 1 & 89.91 \\ \midrule
Roberta-B-32Level$\star$ & 125 & \multirow{2}{*}{QANN} & 85.76 & 92.81 & 95.55 & 52.71 & 1 & 81.71 & 88.36 & 91.88 & 1 & 90.12 \\
Roberta-L-64Level$\star$ & 355 & & 88.77 & 93.24 & 96.70 & 56.11 & 1 & 83.71 & 87.03 & 91.80 & 1 & 89.42 \\ \midrule
SNN-TextCNN & - & \multirow{5}{*}{\begin{tabular}[c]{@{}c@{}}Direct\\ Training\end{tabular}} & 75.45 & 80.91 & 90.60 & 41.63 & 50 & 72.15 & 85.02 & 86.66 & 40 & 85.84 \\
Spikformer$\ddag$ & 110 & & 76.38 & 81.55 & 91.80 & 42.02 & 4 & 72.94 & 85.45 & 86.93 & 4 & 86.19 \\
SpikeBERT & 109 & & 80.69 & 85.39 & 93.00 & 46.11 & 4 & 76.30 & 86.36 & 89.66 & 4 & 88.01 \\
\multirow{2}{*}{SpikeGPT} & 45 & & 69.23 & 80.39 & 88.45 & 37.69 & 50 & 68.94 & n/a & n/a & n/a & n/a \\
 & 216 & & 85.63 & 88.76 & 95.30 & 51.27 & 50 & 80.24 & n/a & n/a & n/a & n/a \\ \midrule
\multirow{2}{*}{\begin{tabular}[c]{@{}c@{}}SpikeZIP-TF\\ (ours)\end{tabular}} & 125 & \multirow{2}{*}{A2S} & \baseline{86.13} & \baseline{92.81} & \baseline{95.55} & \baseline{52.71} & 64 & 
\baseline{81.80} & \baseline{86.77} & \textbf{91.88} & 64 & \textbf{89.33} \\
 & 355 & & \textbf{89.28} & \textbf{93.79} & \textbf{96.70} & \textbf{56.51} & 128 & \textbf{84.07} & \textbf{87.16} & \baseline{91.29} & 128 & \baseline{89.23} \\ \bottomrule
\end{tabular}}
\end{table*}

\paragraph{Comparison on ImageNet} of SpikeZIP-TF and previous methods is tabulated in~\cref{tab:imagenet}. As anticipated, SpikeZIP-TF surpasses previous SOTA methods.
Compared to the SOTA A2S conversion method (MST \cite{wang2023masked}), SpikeZIP-TF achieves 2.94\% higher top-1 accuracy while utilizing fewer time-steps and a more lightweight model (with 6.4M parameter reduction). 
Although direct training methods such as Spikformer \cite{zhou2022spikformer}, Spikingformer \cite{zhou2023spikingformer} and Spikformer V2 \cite{zhou2024spikformer} require lower time-step, they demand relatively high training cost to achieve compatible performance with ANN-to-SNN conversion-based methods.
In contrast, compared to previous SOTA on direct training methods~(Spikformer V2~\cite{zhou2024spikformer}), SpikeZIP-TF incurs significantly lower computational cost while maintaining a more lightweight model and achieving SOTA top-1 accuracy. 
For large-scale models~(ViT-L), after simply fine-tuning and quantizing the publicly available pre-trained ANN, SpikeZIP-TF yields promising performance~(83.28\% on ImageNet).

\paragraph{Comparison on NLP Benchmarks}
We conduct a comparative analysis of SpikeZIP-TF with other SOTA works, including SNN-TextCNN \cite{lv2022spiking}, SpikeGPT \cite{zhu2023spikegpt} and SpikeBERT \cite{lv2023spikebert}. 
The results are summarized in \cref{tab:NLU}. 
SpikeZIP-TF outperforms SpikeGPT \cite{zhu2023spikegpt} and SpikeBERT \cite{lv2023spikebert} in terms of accuracy across both the English datasets and Chinese datasets. 
The improvements in accuracy are particularly notable in the MR (3.65\% increase) and SST-5 (5.24\% increase) datasets. 
Moreover, SpikeZIP-TF achieves the highest accuracy despite having a greater model size (355M) compared to SpikeGPT (216 M). 
It is noteworthy that, as shown in \cref{tab:NLU}, SpikeZIP-TF converted from Roberta-L exhibits lower performance compared to the SpikeZIP-TF converted from Roberta-B. 
This difference can be attributed to the relatively lower accuracy of the pre-trained Roberta-L model compared to the pre-trained Roberta-B model.

\begin{table}[t]
\caption{\textbf{Comparison of training cost.} The SpikeZIP-TF consumes fewer training hours and less energy than SpikeGPT and Spikformer V2. \textbf{Acc.}: short for accuracy. \textbf{N.}: short for Nvidia}
\label{tab:train_cost}
\resizebox{\linewidth}{!}{
\begin{tabular}{@{}ccccccc@{}}
\toprule
Method & Params & Dataset & GPU & Time(h) & Energy(kw/h) & Acc.(\%) \\ \midrule
SpikeGPT & 216 & \multirow{2}{*}{SST-2} & 4 N & 48.0 & 57.6 & 88.76 \\
SpikeZIP-TF & 355 &  & 1 N & \textbf{1.03} & 0.36 & \textbf{93.79} \\ \midrule
SpikformerV2 & 172.7 & \multirow{2}{*}{\begin{tabular}[c]{@{}c@{}}Image\\ Net\end{tabular}} & 8 N & 196.7 & 472.08 & 81.10 \\
SpekeZIP-TF & 304.3 &  & 8 N & \textbf{30.0} & 108.00 & \textbf{83.82} \\ \bottomrule
\end{tabular}}
\end{table}

\subsection{Training Cost Analysis}
One of the key advantages of SpikeZIP-IF is its low training cost, as illustrated in \cref{tab:train_cost}.
SpikeZIP-TF exhibits lower training hours and consumes less energy compared to SpikeGPT and Spikformer V2.
This efficiency stems from SpikeZIP-TF's ability to skip the pre-training stage by leveraging the pre-trained ANN accessed accessed in open sources (\textit{e.g.}, Pytorch Hub, huggingface, etc.) to initialize the quantization-aware training in QANN fine-tuning.

\subsection{Power Estimation on Neuromorphic Hardware}

\begin{table}[t]
\caption{\textbf{The power consumption of SpikeZIP-TF and other works.} The time-step of SpikeGPT used in power estimation is larger than 50, therefore the power is less than 0.234. }
\label{tab:power}
\resizebox{\linewidth}{!}{
\begin{tabular}{@{}cccccc@{}}
\toprule
Method & Params(M) & Dataset & Time-Steps & Power(W) & Acc(\%) \\ \midrule
SpikeBERT & 109 & \multirow{2}{*}{SST-2} & 4 & 7.135 & 85.39 \\
SpikeZIP-TF & 355 &  & 64 & 4.320 & 93.79 \\ \midrule
Spikformer & 66.34 & \multirow{6}{*}{ImageNet} & 4 & 8.02 & 74.81 \\
Spikingformer & 66.34 &  & 4 & 3.42 & 75.85 \\
Spikformer~V2 & 64.18 &  & 4 & 3.67 & 81.17 \\
Spikformer~V2 & 172.70 &  & 4 & 6.39 & 82.35 \\
SpikeZIP-TF & 86.57 &  & 64 & 6.30 & 82.71 \\
SpikeZIP-TF & 304.33 &  & 64 & 19.85 & 83.82 \\ \bottomrule
\end{tabular}}
\end{table}

\begin{figure*}[t]
    \centering
    \subfigure[NLU datasets]{
        \includegraphics[width=0.23\textwidth]{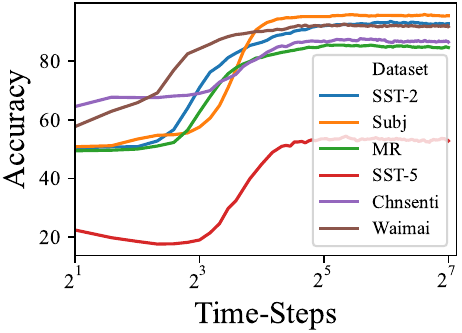}
        \label{fig:subfig1}
    }
    \subfigure[CV datasets]{
        \includegraphics[width=0.23\textwidth]{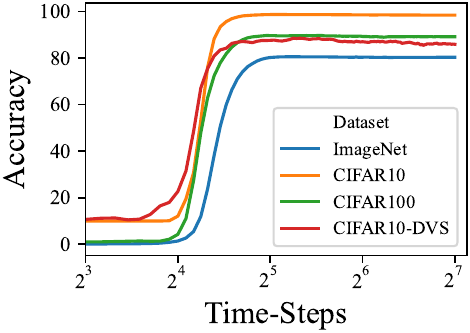}
        \label{fig:subfig2}
    }
    \subfigure[model size]{
        \includegraphics[width=0.23\textwidth]{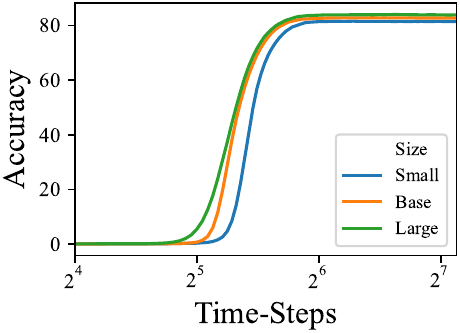}
        \label{fig:subfig3}
    }
    \subfigure[quantization level]{
        \includegraphics[width=0.23\textwidth]{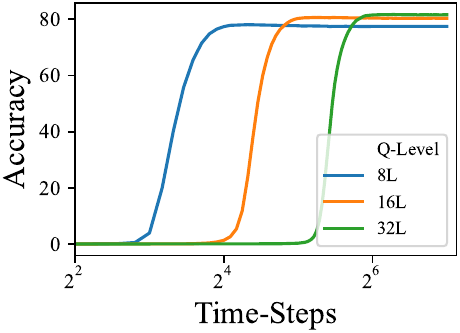}
        \label{fig:subfig4}
    }
    \vskip -0.1in
    \caption{\textbf{Curves of accuracy versus time-step with different settings.} 
    (a) SpikeZIP-TF uses Roberta architecture (QANN is quantized with 32 levels);
    (b) SpikeZIP-TF uses ViT-small (QANN is quantized with 16 levels);
    (c) SpikeZIP-TF use ViT small/base/large as architecture on ImageNet;
    (d) Architecture is ViT-B on ImageNet, where QANNs (\cref{fig:spikezip_arch}) are quantized with different levels.}
    \label{fig:ablation_study}
\end{figure*}

To assess the efficiency of SpikeZIP-TF, we employ the energy model proposed by \citet{cao2015spiking}, which has been utilized in prior works such as \citet{wang2023masked,ding2021optimal}. The model can be expressed as:
\begin{equation}
P = \frac{\textrm{\#total-spikes}}{1 \times 10^{-3}} \times \alpha
\end{equation}
where \#total-spikes is the number of spike activities occurring in SNN during one time-step which takes 1ms in \citet{cao2015spiking} and 1 spike activity consumes $\alpha$ Joules. Unit of $P$ is Watt. 
According to the 45nm hardware \cite{horowitz20141}, we take $\alpha$ as 0.9pJ. 
As summarized in \cref{tab:power}, we use the above power model to compare SpikeZIP-TF with Spikformer \cite{zhou2022spikformer}, Spikformer V2 \cite{zhou2024spikformer}, Spikingformer \cite{zhou2023spikingformer} on ImageNet, as well as SpikeBERT \cite{lv2023spikebert} and SpikeGPT \cite{zhu2023spikegpt} on SST-2. 
During ImageNet inference, although SpikeZIP-TF has higher power consumption than Spikingformer and Spikformer V2 with similar parameters, it achieves lower power with higher accuracy, compared to Spikformer V2. 
This suggests that SpikeZIP-TF can achieve a better power-accuracy trade-off than Spikformer V2.
For SST-2 inference, SpikeZIP-TF exhibits lower power consumption and higher accuracy compared to SpikeBERT, indicating a better power-accuracy trade-off as well.

\subsection{Ablation Study}

\paragraph{Accuracy vs. Time-Steps}
To achieve a better trade-off between accuracy and time-steps in SpikeZIP-TF, we conduct an investigation into various configurations, including the impact of different datasets, model sizes, and quantization levels, with their curves plotted in \cref{fig:ablation_study}. 
Overall, among all the curves in \cref{fig:ablation_study}, there exists a specific time-step called $T_{\rm up}$, beyond which the model's accuracy increases drastically. 
This phenomenon occurs because it requires several time-steps for SpikeZIP-TF to accumulate its output.
\textbf{1) Dataset:} $T_{\rm up}$ increases when 
the dataset becomes harder and more complex. As shown in \cref{fig:ablation_study}(a) and \cref{fig:ablation_study}(b), the $T_{\rm up}$ of SpikeZI-TF with Roberta-B on Chnsenti is much smaller than SST-5 and $T_{\rm up}$ of SpikeZIP-TF with ViT-B on CIFAR10-DVS is smaller than ImageNet. 
\textbf{2) Model Size:} Larger model size leads to better trade-off of accuracy versus time-steps. 
$T_{\rm up}$ of SpikeZIP-TF with ViT on ImageNet decreases when model size becomes large.
The curve of ViT-L in \cref{fig:ablation_study}.(c) is above the curve of ViT-B and ViT-S.
\textbf{3) Quantization Level:} A reduction in quantication level leads to smaller $T_{\rm up}$ values but also results in lower accuracy. 
For SpikeZIP-TF with ViT-B, $T_{\rm up}$ is proportional to quantization level of QViT-B, indicating that the model requires fewer time-steps to complete the inference. 
However, the accuracy of SpikeZIP ViT-S is lower than ViT-L due to the increase of quantization error in QAT. 
Therefore, choosing a suitable quantization level is crucial to strike a balance between accuracy and time-steps in A2S conversion.

\begin{figure}[t]
\centering
\includegraphics[width=\linewidth]{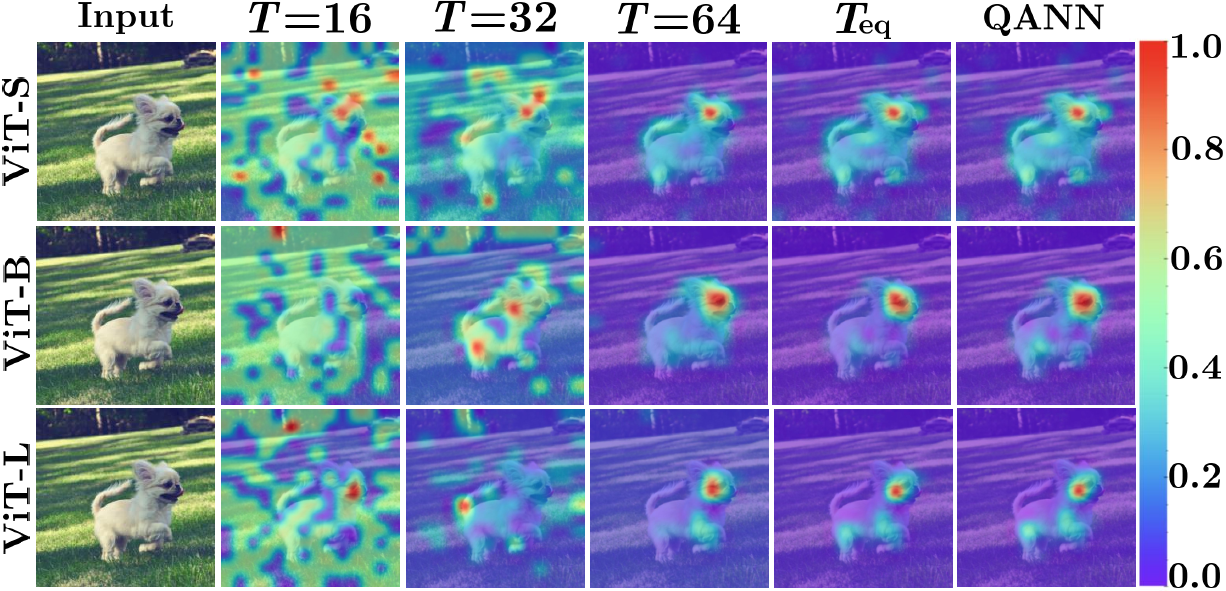}
\caption{
\textbf{Feature map visualization in SpikeZIP-TF (ViT-S/B/L) at different time-steps.} We mask the input with intermediate attention array to visualize the feature map. With the improvement of $T$, feature map in SpikeZIP-TF is more identical to its QANN counterpart.
}
\label{fig:spikezip_visual}
\end{figure}

\subsection{Equivalence Inspection via Experiments}
To further demonstrate that SNN generated by SpikeZIP is functionally equivalent to QANN and takes fewer time-steps for SNN inference, we visualize the evolution of feature maps \textit{w.r.t} different time-steps $T$, as depicted in \cref{fig:spikezip_visual}. The feature maps are obtained by accumulating spiking attention array in SESA and masking the input with the accumulated spiking attention array. We can draw the following observations from \cref{fig:spikezip_visual}: 1) With the spiking accumulation, the feature map of SpikeZIP ViT is gradually close to the corresponding feature map in QANN and final equal to it when $T_{\rm eq}$. 2) When the model size becomes larger, the feature map of SpikeZIP ViT concentrates on the target object at an earlier time-step, which is consistent with the results in \cref{fig:ablation_study}(c).

\section{Conclusion}

SpikeZIP-TF constructs an ANN-to-SNN conversion method that establishes the equivalence between quantized Transformer-based ANN and its SNN counterpart. 
To *make the equivalence framework applicable, we introduce the Spike-Equivalent Self-Attention, Spike-Softmax and Spike-LayerNorm to support the SNN-unfriendly operaters of Transformer-based ANN. Our SpikeZIP-TF leads to state-of-the-art performance on both computer vision, neuromorphic, and natural language understanding tasks. In this work, we mainly focus on ANN-to-SNN conversion method due to its low training cost and nearly loss-less performance between SNN and ANN. We anticipate to extend our SpikeZIP-TF on direct learning methods, which is expected to reduce training cost and achieve promising performance under ultra-low inference time-step.

\section*{Acknowledgments}

This work is partially supported by National Key R\&D Program of China~(2022YFB4500200), National Natural Science Foundation of China~(Nos.62102257, 62306176), Biren Technology–Shanghai Jiao Tong University Joint Laboratory Open Research Fund, Microsoft Research Asia Gift Fund, 
Lingang Laboratory Open Research Fund (No.LG-QS-202202-11), Natural Science Foundation of Shanghai (No. 23ZR1428700), and CCF-Baichuan-Ebtech Foundation Model Fund.

\section*{Impact Statement}
This work is a fundamental research in bridging the state-of-the-art deep neural network and spiking neural network in the community of neuromorphic community. There are many potential societal consequences of our work, none which we feel must be specifically highlighted here.
\bibliographystyle{icml2024}
\bibliography{main}

\clearpage
\newpage
\appendix



\title{Appendix}
\setcounter{page}{1}
\setcounter{section}{0}
\setcounter{table}{0}
\setcounter{figure}{0}
\setcounter{equation}{0}
\renewcommand{\thesection}{A\arabic{section}}
\renewcommand{\thetable}{A\arabic{table}}
\renewcommand{\thefigure}{A\arabic{figure}}
\renewcommand{\theequation}{A\arabic{equation}}

\begin{table*}[h]
\caption{Experiments with Swin-Transformer Tiny on A2S methods and corresponding digital transformers.}
\label{tab:swin}
\resizebox{\linewidth}{!}{
\begin{tabular}{llcccccccc}
\hline
&&\multicolumn{2}{c}{ImageNet}&\multicolumn{2}{c}{Cifar-100}&\multicolumn{2}{c}{Cifar-10}&\multicolumn{2}{c}{Cifar-10-DVS}\\
\multirow{-2}{*}{Method}&\multirow{-2}{*}{Category}&Acc/\#T&Power(W)&Acc/\#T&Power(W)&Acc/\#T&Power(W)&Acc/\#T&Power(W)\\\hline
MST&QANN&{80.51/1}&{20.810}&{88.72/1}&{20.730}&{98.14/1}&{20.730}&{88.98/1}&{20.940}\\
MST&SNN&{78.51/512}&{8.528}&{86.91/256}&{8.286}&{97.27/256}&{8.304}&{88.12/512}&{8.188}\\\hline
SpikeZIP-TF(ours)&QANN&{80.70/1}&{20.170}&{87.94/1}&{20.170}&{98.38/1}&{20.170}&{90.50/1}&{20.280}\\
SpikeZIP-TF(ours)&SNN&{\textbf{80.74/64}}&{\textbf{1.363}}&{\textbf{87.91/32}}&{\textbf{1.428}}&{\textbf{98.45/32}}&{\textbf{1.422}}&{\textbf{90.40/64}}&{\textbf{1.317}}\\\hline
\end{tabular}
}
\end{table*}

\section{Experimental Results on Swin-Transformer}
In \cref{tab:swin}, we conduct comprehensive experiments with Swin-Transformer Tiny network, on ImageNet, CIFAR-100, CIFAR-10 and CIFAR-10-DVS. The power consumption is calculated by Equation (6) in the manuscript, which is adopted in MST as well. Compared with the digital Transformers (QANN) counterpart, our SpikeZIP-TF achieves on-par accuracy with lower power consumption. Note that, the minor accuracy difference between QANN and SNN in SpikeZIP-TF are resulted from the GPU numeric error. For MST (the closest peer of SpikeZIP-TF), SNN suffers not only the distinguishable accuracy degradation from its QANN, but also the power reduction is lower than that of SpikeZIP-TF.

\section{Calculation of Total Spikes}
The $total~spikes$ means the total spike activity during one time-step, whose type includes the pre-synaptic (\textit{i.e.}, delivered by synapases) and post-synaptic (\textit{i.e.}, generated by neurons). The number of post-synaptic spikes $N_{\rm post}$ is calculated by \cref{eqt:post}.
\begin{equation}
    N_{\rm post} = \sum_{i=0}^l R^i \times N_{\rm neu}^i
    \label{eqt:post}
\end{equation}
where $R$ is the firing rate of the $i$-th layer and $N_{\rm neu}^i$ is the number of neurons of the $i$-th layer. $N_{\rm pre}$ reflects the update operation in neurons. Then for the number of pre-synaptic spikes, the calculation depends on the synaptic connection structure, which is modeled as:
\cref{eqt:pre}.
\begin{equation}
    N_{\rm pre} = \sum_{i=0}^l R^{i-1} \times f_{\rm in}^i \times N_{\rm neu}^i
    \label{eqt:pre}
\end{equation}
where $f_{\rm in}^i$ is the number of fan-in operation of the $i$-t layer. $f_{\rm in}^i$ depends on the connection struture of between neuron layers. For the convolution layer, $f_{{\rm in},{\rm conv}}^i$ is calculated by:
\begin{equation}
\begin{aligned}
    f_{{\rm in},{\rm conv}} = C_{\rm in} \times K_{\rm H} \times K_{\rm W}
\end{aligned}
\label{eqt:fin_conv}
\end{equation}
where $C_{\rm in}$ is the input channel; $K_{\rm H}$ and $K_{\rm W}$ are the height and width of kernel; Therefore, for convolution layer, the pre-synaptic spikes $N_{\rm pre}$ is:
\begin{equation}
\begin{aligned}
    N_{{\rm pre},{\rm conv}} = C_{\rm in} \times K_{\rm H} \times K_{\rm W} \times C_{\rm out} \times O_{\rm H} \times O_{\rm W}
\end{aligned}
\label{eqt:pre_conv}
\end{equation}
where $C_{\rm out}$ is the output channel; $O_{\rm H}$ and $O_{\rm W}$ are the height and width of output feature. In \cref{eqt:pre_conv}, $C_{\rm in} \times K_{\rm H} \times K_{\rm W}$ is the fan-in of one neuron and $C_{\rm out} \times O_{\rm H} \times O_{\rm W}$ is the number of neurons in one neuron layer. It is worth noted the relationship between $f_{{\rm in},{\rm conv}}$ and the FLOPS of convolution is $2 \times f_{{\rm in},{\rm conv}} = {\rm FLOPS}_{\rm conv}$. For the linear layer, $f_{{\rm in},{\rm fc}}$ is the number of input neurons of $N_{\rm in}$. The pre-synaptic spikes of the linear layer $N_{{\rm pre},{\rm fc}}$ is:
\begin{equation}
\begin{aligned}
    N_{{\rm pre},{\rm fc}} = N_{\rm in} \times N_{\rm out}
\end{aligned}
\label{eqt:pre_linear}
\end{equation}
where $N_{\rm in}$ and $N_{\rm out}$ are the numbers of input and output features of the linear layer. After calculating the pre-synaptic and post-synaptic spikes, the total spikes is equal to the sum of pre-synaptic and post-synaptic spikes:
\begin{equation}
    \textrm{\#total-spikes} = N_{\rm pre} + N_{\rm post}
\label{eqt:total_spikes}
\end{equation}

\section{Proof of Equivalence:}
\subsection{Preliminaries}
\paragraph{Notations definition.}
For reader-friendly, we provide the notations used in the following proof in \cref{tab:notation_table1}.

\begin{table}[t]
\centering
\resizebox{\linewidth}{!}{
\renewcommand{\arraystretch}{1.0}
\begin{tabular}{cl}
\toprule
Notation & \multicolumn{1}{c}{Description} \\ 
\midrule
$V_t$ & \begin{tabular}[c]{@{}l@{}}potential of neuron membrane at time-step $t$\end{tabular} \\
$V_{\textrm{thr}}$ & \begin{tabular}[c]{@{}l@{}}threshold voltage for neuron to fire a spike\end{tabular} \\
$V^{\textrm{in}},V^{\textrm{out}}$ & \begin{tabular}[c]{@{}l@{}}input or output voltage of neuron \end{tabular} \\
$T_{\textrm{eq}}$ & \begin{tabular}[c]{@{}l@{}}time-step that neuron enters equilibrium state\end{tabular} \\
$T_{\textrm{off}}$ & \begin{tabular}[c]{@{}l@{}}time-steps when input and bias are turned off\end{tabular} \\
$S_{t}$ & \begin{tabular}[c]{@{}l@{}}spike tracer at time-step $t$\end{tabular} \\
$S_{\textrm{max}} / S_{\textrm{min}}$ & \begin{tabular}[c]{@{}l@{}}maximum/minimum value in spike tracer \end{tabular} \\
$\textrm{clip}(x,\alpha_\textrm{min},\alpha_\textrm{max})$ & \begin{tabular}[c]{@{}l@{}}clip function that limits $x$ between  $\alpha_\textrm{min}$ and $\alpha_\textrm{max}$\end{tabular} \\
$\textrm{floor}(x)$ & floor function that round down $x$ \\
$\Theta(V,V_{\textrm{thr}},S)$ & output spike decision function of ST-BIF neuron 
\\
$\mQ_t, \mK_t$ & spiking Query and spiking Key matrix in SESA. \\
$\mS_{Q,t}, \mS_{K,t}$ & accumulated spike trains of query and key. \\
\bottomrule
\end{tabular}
}
\caption{Summary of mathematical notations used in the proof.}
\label{tab:notation_table1}
\end{table}

\paragraph{ST-BIF Neuron Model}
\par The definition of the ST-BIF neuron model is:
\begin{equation}
\label{eqt:ST-BIF1}
\begin{gathered}
V_t = 
V_{t-1} + V^\textrm{in}_t - V_\textrm{thr} \cdot \Theta(V_{t-1} + V^\textrm{in}_t, V_\textrm{thr}, S_{t-1}) \\
S_t = S_{t-1} + \Theta(V_{t-1} + V^\textrm{in}_t, V_\textrm{thr}, S_{t-1}) \\
\Theta(V, V_\textrm{thr}, S) = 
\begin{cases}
1 ;& V \geq V_\textrm{thr} ~ \& ~ S < S_{\textrm{max}} \\
0 ;&  \textrm{other} \\
-1 ;&  V < 0 ~ \& ~ S > S_{\rm min}
\end{cases}
\end{gathered}
\end{equation}
where the first equation of \cref{eqt:ST-BIF1} depicts the membrane potential updating in ST-BIF neuron. The membrane potential at time-step $t$ equals to the membrane potential at the prior time-step $t-1$ adding the potential $V^\textrm{in}_t$ caused by the input charge at $t$ time-step, then subtract the potential of the fired spike. The fired spike is recorded by the spike tracer defined in the second equation of \cref{eqt:ST-BIF1}. The firing behavior of ST-BIF neuron depends on the spike decision function $\Theta$ in the third equation.

\paragraph{Equilibrium State}
Assume the external stimulate (\textit{e.g.}, input and bias) are applied to SNN from $T=0$ to $T_{\rm off}$, we define the \textbf{equilibrium state} of SNN as the status where neurons of entire SNN are static (\textit{e.g.}, no further activities of neuron firing and membrane update). 
The time-step that SNN enters the equilibrium state is noted as $T_{\rm eq}$.

\subsection{The Equivalence between Quantized Function and ST-BIF$^{+}$ Neuron}

\begin{lemma}
\label{sec:lemma1}
After entering the equilibrium state at $T_{\rm eq}$, the accumulated output spikes of one ST-BIF neuron can be derived as a closed-form equation of quantization function:
\begin{equation}
\label{eqt:IO-projection-of-BIF-neuron2}
    V^{\rm out} = V_{\rm thr} \cdot {\rm clip}( {\rm floor}( \dfrac{V^{\rm in} + V_{t=0}}{V_{\rm thr}}) , S_{{\rm min}} , S_{{\rm max}}) 
\end{equation}
where $V^\textrm{in} = \sum_{t=0}^{T_{\rm eq}} V^\textrm{in}_t$ is the accumulated input until $T_\textrm{eq}$, and $V_{t=0}$ denotes the initial membrane potential. 
\end{lemma}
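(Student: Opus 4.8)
The plan is to reduce \cref{eqt:IO-projection-of-BIF-neuron2} to two elementary ingredients: a telescoping identity for the membrane potential, and a characterization of the equilibrium state. Throughout, write $S_{T_{\rm eq}}$ for the value of the spike tracer once the neuron is static, assume the standard initialization $S_{0}=0$ (so the net number of emitted spikes is $S_{T_{\rm eq}}$), and note that under the reset-by-subtraction convention each $\pm1$ produced by $\Theta$ delivers $\pm V_{\rm thr}$ to the output, hence $V^{\rm out}=V_{\rm thr}\sum_{t}\Theta(\cdot)=V_{\rm thr}S_{T_{\rm eq}}$. Summing the membrane-update rule in \cref{eqt:ST-BIF1} over $t$ up to $T_{\rm eq}$ telescopes the potential differences and collapses the reset terms, yielding
\begin{equation*}
V_{T_{\rm eq}} \;=\; V_{t=0} + V^{\rm in} - V_{\rm thr}\,S_{T_{\rm eq}},
\qquad\text{equivalently}\qquad
V^{\rm in}+V_{t=0} \;=\; V_{\rm thr}\,S_{T_{\rm eq}} + V_{T_{\rm eq}}.
\end{equation*}
I will also use the invariant $S_{\rm min}\le S_{t}\le S_{\rm max}$ for every $t$, which is immediate from the guards in $\Theta$: a $+1$ is forbidden once $S=S_{\rm max}$, a $-1$ once $S=S_{\rm min}$, and $S$ moves in unit steps from $S_{0}=0\in[S_{\rm min},S_{\rm max}]$.

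Next I characterize equilibrium. By definition the neuron emits no spike at $T_{\rm eq}$, i.e. $\Theta(V_{T_{\rm eq}},V_{\rm thr},S_{T_{\rm eq}})=0$. Inspecting the three branches of $\Theta$ and using the tracer invariant, this is equivalent to the conjunction of ``no positive spike'', namely $V_{T_{\rm eq}}<V_{\rm thr}$ \emph{or} $S_{T_{\rm eq}}=S_{\rm max}$, and ``no negative spike'', namely $V_{T_{\rm eq}}\ge 0$ \emph{or} $S_{T_{\rm eq}}=S_{\rm min}$.

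Finally I combine the two ingredients by cases on the saturation status of $S_{T_{\rm eq}}$. If $S_{\rm min}<S_{T_{\rm eq}}<S_{\rm max}$, the equilibrium conditions force $0\le V_{T_{\rm eq}}<V_{\rm thr}$; dividing the telescoping identity by the positive threshold $V_{\rm thr}$ puts $(V^{\rm in}+V_{t=0})/V_{\rm thr}$ in $[S_{T_{\rm eq}},S_{T_{\rm eq}}+1)$, so its floor equals the integer $S_{T_{\rm eq}}$, which lies strictly inside $[S_{\rm min},S_{\rm max}]$, so ${\rm clip}$ acts as the identity there and the right-hand side of \cref{eqt:IO-projection-of-BIF-neuron2} is $V_{\rm thr}S_{T_{\rm eq}}=V^{\rm out}$. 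If $S_{T_{\rm eq}}=S_{\rm max}$, then (since $S_{\rm max}>S_{\rm min}$) equilibrium gives $V_{T_{\rm eq}}\ge 0$, so the identity forces $(V^{\rm in}+V_{t=0})/V_{\rm thr}\ge S_{\rm max}$, hence ${\rm floor}(\cdot)\ge S_{\rm max}$ and ${\rm clip}(\cdot,S_{\rm min},S_{\rm max})=S_{\rm max}$, matching $V^{\rm out}=V_{\rm thr}S_{\rm max}$. The case $S_{T_{\rm eq}}=S_{\rm min}$ is symmetric: equilibrium gives $V_{T_{\rm eq}}<V_{\rm thr}$, so $(V^{\rm in}+V_{t=0})/V_{\rm thr}<S_{\rm min}+1$, hence ${\rm floor}(\cdot)\le S_{\rm min}$ and the clip returns $S_{\rm min}$ (the degenerate $S_{\rm min}=S_{\rm max}$ being trivial). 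These cases are exhaustive, which proves the claim.

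I expect the only genuinely delicate point to be the saturation boundaries in the last step: one must be confident that ``equilibrium'' really pins down the sign of the residual potential — $V_{T_{\rm eq}}\ge 0$ when $S_{T_{\rm eq}}=S_{\rm max}$ and $V_{T_{\rm eq}}<V_{\rm thr}$ when $S_{T_{\rm eq}}=S_{\rm min}$ — which is exactly where $S_{\rm min}<S_{\rm max}$ and the tracer-bound invariant are needed, and this is also the place to pin down the implicit convention $S_{0}=0$ that makes $V^{\rm out}=V_{\rm thr}S_{T_{\rm eq}}$ rather than $V_{\rm thr}(S_{T_{\rm eq}}-S_{0})$. Everything else is bookkeeping with the telescoping identity.
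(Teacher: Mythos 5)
Your proof is correct and follows essentially the same route as the paper's: the telescoped identity $V_{T_{\rm eq}} = V_{t=0} + V^{\rm in} - V_{\rm thr}S_{T_{\rm eq}}$, the relation $V^{\rm out} = V_{\rm thr}S_{T_{\rm eq}}$, and a three-way case split yielding the floor and the two clip bounds (you index the cases by the tracer's saturation status, the paper by the normalized input, but these partitions coincide). If anything you are slightly more careful than the paper in deriving $0 \le V_{T_{\rm eq}} < V_{\rm thr}$ from both equilibrium guards before invoking the floor, a point the paper leaves implicit.
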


\begin{proof}

\par Starting from the first equation in \cref{eqt:ST-BIF1}, the membrane potential can be calculated without using the recursive form by summing over simulated time $T$:
\begin{equation}
\label{eqt:V_sum}
    V_T - V_0 = \sum^T_{t=1}V^{\textrm{in}}_t - V_{\textrm{thr}}\cdot\sum^T_{t=1}\Theta(V_{t-1}+V_t^{\textrm{in}},V_{\textrm{thr}},S_{t-1}) 
\end{equation}
We sum the spike tracer $S_t$ in~\cref{eqt:ST-BIF1} over the inference time-steps $T$, which is described as:
\begin{equation}
\label{eqt:S_sum}
    S_T - S_0 =  \sum^T_{t=1}\Theta(V_{t-1}+V_t^{\textrm{in}},V_{\textrm{thr}},S_{t-1})
\end{equation}
where $S_0 = 0$ is the default setting. By substituting \cref{eqt:S_sum} into \cref{eqt:V_sum}, \cref{eqt:V_sum} is simplified as:
\begin{equation}
\label{eqt:SV_combine}
    V_T = (\sum^T_{t=1}V^{\textrm{in}}_t + V_0) - V_{\textrm{thr}}\cdot S_T
\end{equation}
Then, we divide both sides of \cref{eqt:SV_combine} by $V_{\textrm{thr}}$. With additional simple transformation, we get:
\begin{equation}
\label{eqt:S_calculate}
    S_T = \frac{(\sum^T_{t=1}V^{\textrm{in}}_t + V_0 - V_T)}{V_{\textrm{thr}}}
\end{equation}
Hereby, we discuss three cases about $S_T$ in \cref{eqt:S_calculate} as follows.

\paragraph{Case 1.} $ S_\textrm{min} \leq (\sum^T_{t=1}V^{\textrm{in}}_t + V_0)/V_{\rm thr} \leq S_\textrm{max}$: When $T \geq T_{\textrm{eq}}$, according to the definition of equilibrium state, the membrane potential of the ST-BIF neuron is insufficient to fire a spike, which means $V_T < V_{\textrm{thr}}$. Since $S_T$ is an integer in \cref{eqt:S_calculate}, based on the definition of round down function~(e.g., floor), $S_T$ can be rewritten as:
\begin{equation}
\label{eqt:S_case1}
\begin{gathered}
    S_T = \textrm{floor}(\frac{\sum^T_{t=1}V^{\textrm{in}}_t + V_0}{V_{\textrm{thr}}})
\end{gathered}
\end{equation}
where the error caused by the rounding (down) of the membrane potential in \cref{eqt:S_case1} is equal to $V_T$. \cref{eqt:S_case1} represents the discretization part in the quantized-ReLU (Q-ReLU) function.

\paragraph{Case 2.} $(\sum^T_{t=1}V^{\textrm{in}}_t + V_0)/V_{\rm thr} > S_\textrm{max}$: According to the firing decision function $\Theta$, the ST-BIF neuron fires positive spikes until spike tracer $S_T = S_{\textrm{max}}$, then the $S_{T}$ becomes static:
\begin{equation}
\label{eqt:S_case2}
\begin{gathered}
    S_T = S_{\textrm{max}}
\end{gathered}
\end{equation}
In virtue of setting the upper bound of $S_T$, we successfully limit the accumulated output in the ST-BIF neuron to $S_\textrm{max}$, which corresponds to the clipping upper bound in Q-ReLU.

\paragraph{Case 3.} $(\sum^T_{t=1}V^{\textrm{in}}_t + V_0)/V_{\rm thr} < S_\textrm{min}$: Similar to Case 2, before the $T_{\rm eq}$, the ST-BIF neuron fires negative spikes until spike tracer $S_T = S_{\rm min}$, then $S_T$ is fixed.
\begin{equation}
\label{eqt:S_case3}
\begin{gathered}
    S_T = S_{\textrm{min}}
\end{gathered}
\end{equation}
Then, we leverage the clip function to combine \cref{eqt:S_case1}, \cref{eqt:S_case2} and \cref{eqt:S_case3}, then we can derive:
\begin{equation}
\label{eqt:S_combine}
    S_T = \textrm{clip}(\textrm{floor}(\frac{\sum_{t=1}^T V_t^{\textrm{in}} + V_0}{V_{\textrm{thr}}}),S_{\textrm{min}},S_{\textrm{max}})
\end{equation}
The total output of an ST-BIF neuron can be defined as:
\begin{equation}
\label{eqt:V_out}
    V^{\textrm{out}} = \sum^{T}_{t=1} V^{\textrm{out}}_t = V_{\textrm{thr}} \cdot \sum^{T}_{t=1}\Theta(V_{t-1}+V^{\textrm{in}}_t,V_{\textrm{thr}},S_{t-1})
\end{equation}
where $\sum^{T}_{t=1}\Theta(V_{t-1}+V^{\textrm{in}}_t, V_\textrm{thr}, S_{t-1})$ denotes the number of total output spikes of an ST-BIF neuron. \cref{eqt:V_out} shows the accumulated output of the ST-BIF neuron is equal to the number of total output spikes scaled by the firing threshold $V_{\textrm{thr}}$. We substitute \cref{eqt:S_sum} in \cref{eqt:V_out} and get:
\begin{equation}
\label{eqt:Q_out}
    V^{\textrm{out}} = V_{\textrm{thr}} \cdot S_T    
\end{equation}
Then, we further substitute \cref{eqt:S_combine} in \cref{eqt:Q_out}:
\begin{equation}
\label{eqt:final}
    V_{\textrm{out}} = V_{\textrm{thr}} \cdot \textrm{clip}(\textrm{floor}(\frac{\sum_{t=1}^T V_t^{\textrm{in}} + V_0}{V_{\textrm{thr}}}),0,S_{\textrm{max}})
\end{equation}
Proof complete.

\end{proof}

\subsection{The Equivalence of Spike-Equivalent Self-Attention (SESA)}

\paragraph{Dynamic Model of SESA.}
Before we prove the equivalence between the SESA and quantized self-attention shown in \cref{fig:spikezip_arch}, we introduce the dynamic model of SESA during the SNN inference:
\begin{equation}
    \begin{gathered}        
    \mS_{{\rm Q},t} = \mS_{{\rm Q},t-1} + \mQ_t, ~~
    \mS_{{\rm K},t} = \mS_{{\rm K},t-1} + \mK_t \\
    \mO_t = \mS_{{\rm Q},t} \cdot \mK_t^T + \mQ_{t} \cdot \mS_{{\rm K}, t}^T - \mQ_{t} \cdot \mK_t^T
    \end{gathered}
\end{equation}
where $O_t$ is the output of SESA at $t$ time-step, other notations are summarized in \cref{tab:notation_table1}. As shown in \cref{fig:matrix_precess}, we calculate the output of SESA at $t$ time-step (the green dotted frame) by doing three matrix multiplication. 

\begin{lemma}
\label{sec:lemma2}
After entering the equilibrium state $T \geq T_{\rm eq}$, the accumulated output of Spiking-Equivalent Self-Attention (SESA) equals the output of Quantized Self-Attention (QSA) with same input:
\begin{equation}
    \sum_{t=0}^{T_{\rm eq}} \mO_t = \mO_q
    \label{eqt:Lemma2}
\end{equation}
Where $\mO_t$ and $\mO_q$ are the accumulated output of SESA and the output of QSA.
\end{lemma}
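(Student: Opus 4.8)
The plan is to prove \cref{sec:lemma2} by unfolding the time-recursion in the dynamic model of SESA and showing that the telescoping sum collapses exactly to the expression for Quantized Self-Attention that was already derived in \cref{eqt:multi2}. First I would use the first two equations of the dynamic model, namely $\mS_{{\rm Q},t}=\mS_{{\rm Q},t-1}+\mQ_t$ and $\mS_{{\rm K},t}=\mS_{{\rm K},t-1}+\mK_t$, together with the initial conditions $\mS_{{\rm Q},0}=\mS_{{\rm K},0}=\vzero$, to conclude that the spike tracers are simply the running partial sums: $\mS_{{\rm Q},t}=\sum_{\tau=0}^{t}\mQ_\tau$ and $\mS_{{\rm K},t}=\sum_{\tau=0}^{t}\mK_\tau$. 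This identifies the tracer content at the final time-step $T_{\rm eq}$ with the fully accumulated (quantized) query and key, i.e. $\mS_{{\rm Q},T_{\rm eq}}=\mQ_{\rm q}$ and $\mS_{{\rm K},T_{\rm eq}}=\mK_{\rm q}$, which is where the equivalence between ST-BIF$^+$ and the quantized function (\cref{sec:lemma1}) enters: it is that lemma which guarantees the accumulated spike trains reproduce the quantized activations.

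Next I would substitute the tracer identities into the output equation $\mO_t=\mS_{{\rm Q},t}\cdot\mK_t^T+\mQ_t\cdot\mS_{{\rm K},t}^T-\mQ_t\cdot\mK_t^T$ and sum over $t=0,\dots,T_{\rm eq}$. The key algebraic observation is that $\sum_{t}\bigl(\mS_{{\rm Q},t}\cdot\mK_t^T+\mQ_t\cdot\mS_{{\rm K},t}^T-\mQ_t\cdot\mK_t^T\bigr)$ is exactly the standard decomposition of the product of two partial sums: writing $A=\sum_{t_1}\mQ_{t_1}$ and $B=\sum_{t_2}\mK_{t_2}$, one has $A\cdot B^T=\sum_{t_1\le t_2}\mQ_{t_1}\mK_{t_2}^T+\sum_{t_2\le t_1}\mQ_{t_1}\mK_{t_2}^T-\sum_{t_1=t_2}\mQ_{t_1}\mK_{t_2}^T$, and the three matrix products in $\mO_t$ are precisely the per-$t$ increments of these three double sums. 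Hence $\sum_{t=0}^{T_{\rm eq}}\mO_t=\bigl(\sum_{t_1}\mQ_{t_1}\bigr)\cdot\bigl(\sum_{t_2}\mK_{t_2}\bigr)^T=\mQ_{\rm q}\cdot\mK_{\rm q}^T$, which is the attention array of QSA (this matches \cref{eqt:multi2}). The same argument applied to the second AA multiplication (attention array times value) and the intervening AW multiplications, Spike-Softmax and Spike-LayerNorm — each of which was shown in \cref{eqt:spiking_version,eqt:multi1} to be accumulation-equivalent to its ANN counterpart — lets me chain the equivalences through the whole self-attention block to obtain $\sum_{t=0}^{T_{\rm eq}}\mO_t=\mO_q$.

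The main obstacle I anticipate is not the telescoping identity itself, which is routine, but carefully justifying the interchange of the ``accumulate-then-apply'' nonlinear operators (Spike-Softmax, Spike-LayerNorm) with the surrounding linear spiking operations — that is, arguing that feeding the tracer sum into $\sigma$ and then differencing via $\mO_{{\rm s},t}=\mO_t-\mO_{t-1}$ produces a spike train whose accumulation is the correct input to the next AA multiplication, and that this composition does not introduce rounding mismatch relative to the quantized pipeline. This requires invoking the equilibrium-state hypothesis $T\ge T_{\rm eq}$ to ensure every intermediate tracer has stabilized to its final quantized value before the downstream operator is evaluated, and invoking \cref{sec:lemma1} at each ST-BIF$^+$ layer to certify that ``accumulated spikes = quantized activation'' holds exactly. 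Once the pipeline is traversed layer by layer under this hypothesis, the equality \cref{eqt:Lemma2} follows.
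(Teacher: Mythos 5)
Your proposal is correct and follows essentially the same route as the paper: the paper likewise identifies the spike tracers with the running partial sums $\mS_{{\rm Q},t}=\sum_{t_1=0}^{t}\mQ_{t_1}$, $\mS_{{\rm K},t}=\sum_{t_1=0}^{t}\mK_{t_1}$, and then uses exactly your decomposition of the product of two sums into upper-triangular, lower-triangular, and diagonal parts (illustrated in \cref{fig:sum_up}) to show $\sum_{t=0}^{T_{\rm eq}}\mO_t=\mQ_{\rm q}\cdot\mK_{\rm q}$. The only difference is direction (the paper starts from $\mO_q$ and expands, you start from the recursion and collapse), and your closing remarks about chaining through Spike-Softmax and Spike-LayerNorm go beyond what this lemma's proof actually covers in the paper, where those operators are handled separately in \cref{Lemma3}.
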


\begin{proof}
\label{sec:lemma2_proof}
Firstly, according to the formula of QSA n \cref{eqt:multi2}, the output of QSA $\mO_q$ can be written:
\begin{equation}
    \LHS = \mO_q = \mQ_{\textrm{q}} \cdot \mK_{\textrm{q}}
    \label{eqt:qsa_def}
\end{equation}
where $\mQ_{\textrm{q}}$, $\mK_{\textrm{q}}$ are the query and key matrices in QSA, which are also the outputs of the quantized functions. In SESA, these quantized functions are replaced by the ST-BIF$^+$ neurons. By leveraging the \cref{sec:lemma1}, we build the relation between $\mQ_{\textrm{q}}$, $\mK_{\textrm{q}}$ and the spiking query $\mQ_{\textrm{t}}$ and spiking key $\mK_{\textrm{t}}$:
\begin{equation}
    \LHS = \mQ_{\textrm{q}} \cdot \mK_{\textrm{q}} = \sum_{t=0}^{T_{\textrm{eq}}} \mQ_{t} \cdot \sum_{t=0}^{T_{\textrm{eq}}} \mK_{t}
    \label{eqt:qsa_def}
\end{equation}
\begin{figure}[t]
\centering
\includegraphics[width=\linewidth]{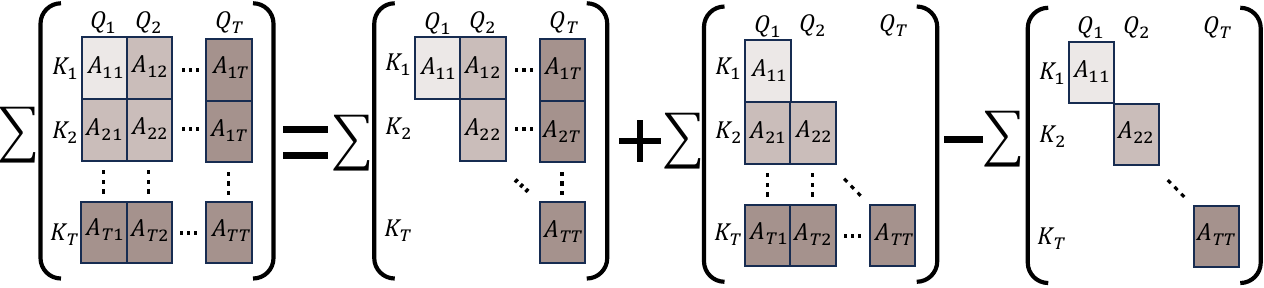}
\caption{The Decomposition of Summing Up.} 
\label{fig:sum_up}
\end{figure}
where $\sum_{t=0}^{T_{\textrm{eq}}} \mQ_{t} \cdot \sum_{t=0}^{T_{\textrm{eq}}} \mK_{t}$ can be considered summing every element of a 2-D matrix up, whose element $A_{i,j} = \mQ_{i} \cdot \mK_{j}$. The procedure is also shown in the left brown matrix in \cref{fig:sum_up}. Then, as illustrated in \cref{fig:sum_up}, we decompose the sum operation into three parts: upper triangular, lower triangular, and diagonal. Therefore, we can rewrite the  $\sum_{t=0}^{T_{\textrm{eq}}} \mQ_{t} \cdot \sum_{t=0}^{T_{\textrm{eq}}} \mK_{t}$ as:
\begin{equation}
\begin{aligned}
    &\LHS = \sum_{t=0}^{T_{\textrm{eq}}} \mQ_{t} \cdot \sum_{t=0}^{T_{\textrm{eq}}} \mK_{t} = \\ &\sum_{t=0}^{T_{\textrm{eq}}} \mQ_{t} \sum_{t_1=0}^{t} \mK_{t_1}^T + \sum_{t=0}^{T_{\textrm{eq}}} \sum_{t_1=0}^{t} \mQ_{t_1} \mK_{t}^T - \sum_{t=0}^{T_{\textrm{eq}}} \mQ_{t} \mK_{t}^T = \\
    &\sum_{t=0}^{T_{\textrm{eq}}}(\mQ_{t} \sum_{t_1=0}^{t} \mK_{t_1}^T +\sum_{t_1=0}^{t} \mQ_{t_1} \mK_{t}^T - \mQ_{t} \mK_{t}^T)
    \label{eqt:SESA_equal}
\end{aligned}
\end{equation}
where the $\sum_{t_1=0}^{t} \mQ_{t_1}$ and $\sum_{t_1=0}^{t} \mK_{t_1}^T$ are the accumulated spike trains of query and key at $t$ time-step, which equal the $\mS_{{\rm Q},t}$ and $\mS_{{\rm K},t}$:
\begin{equation}
\begin{aligned}
    &\LHS = \sum_{t=0}^{T_{\textrm{eq}}}(\mQ_{t} 
\mS_{{\rm K},t}^T + \mS_{{\rm Q},t} \mK_{t}^T - \mQ_{t} \mK_{t}^T) = \\
    & \sum_{t=0}^{T_{\textrm{eq}}} \mO_{t} = \RHS
    \label{eqt:SESA_equal1}
\end{aligned}
\end{equation}
\cref{eqt:SESA_equal1} prove the output of QSA ($\LHS$) equals the accumulated output of SESA ($\RHS$). Proof complete.
\end{proof}
\subsection{The Equivalence of Spike-Softmax and Spike-LayerNorm.} 
\paragraph{Dynamic Model.} The dynamic model of Spike-Softmax and Spike-LayerNorm at each time-step are inspired by the differential algorithm, which can be written as:
\begin{equation}
\begin{gathered}    
\quad X_{t} = X_{t-1} + x_{t}; \quad
O_{t} = {\rm \sigma}(X_{t}) \\
o_{t} = O_{t} - O_{t-1} 
\label{eqt:SS_SL_model}
\end{gathered}
\end{equation}
where $X_t$ is the accmulated input during $t$ time-step, $x_t$ is the input at $t$ time-step, $\sigma$ is the Softmax/LayerNorm function, $o_t$ is the output of Spike-Softmax and Spike-LayerNorm at $t$ time-step. \cref{eqt:SS_SL_model} decompose the activation in Softmax and LayerNorm into multiple time-steps without changing the summation.
\begin{lemma}
    If the accumulated input in SNN equals to the input in QANN, the accumulated output of Spike-Softmax/Spike-LayerNorm equals the output of Softmax/LayerNorm after entering equilibruim state:
    \begin{equation}
        \sum_{t=0}^{T_{\rm eq}} o_t = o_q; \quad {\rm s.t.} \sum_{t=0}^{T_{\rm eq}} x_t = x_q
    \end{equation}
    \label{Lemma3}
    where $x_q$ and $o_q$ the input and output of Softmax (LayerNorm). 
\end{lemma}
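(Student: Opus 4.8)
The plan is to recognize that the dynamic model in \cref{eqt:SS_SL_model} is, by construction, a telescoping decomposition of a single \emph{exact} function evaluation, so the lemma reduces to (i) unrolling the input accumulator, (ii) collapsing the output telescope, and (iii) pinning down what the equilibrium state means for a non-spiking operator. In contrast to \cref{sec:lemma1}, there is no floor/clip discretization error to control here: the operator evaluates $\sigma$ exactly on its running input sum, so the only quantity that could be ``wrong'' is the running sum itself, not the value that $\sigma(\cdot)$ returns.

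First I would unroll the input accumulator. From $X_t = X_{t-1} + x_t$ together with the initialization $X_{-1}=0$, an easy induction gives $X_T = \sum_{t=0}^{T} x_t$ for every $T$. Next I would invoke the definition of the equilibrium state together with \cref{sec:lemma1}: once $T \ge T_{\rm eq}$ every upstream ST-BIF\textsuperscript{+} neuron has stopped firing, so the increments $x_t$ that feed Spike-Softmax / Spike-LayerNorm satisfy $x_t = 0$ for all $t > T_{\rm eq}$. Hence $X_{T_{\rm eq}} = \sum_{t=0}^{T_{\rm eq}} x_t = x_q$, where the last equality is precisely the side condition $\sum_{t=0}^{T_{\rm eq}} x_t = x_q$ assumed in the statement.

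Second, using $o_t = O_t - O_{t-1}$ with base case $o_0 = O_0$ (equivalently $O_{-1} := 0$, since the operator stores no output before inference begins), the accumulated output telescopes:
\[
\sum_{t=0}^{T_{\rm eq}} o_t = O_{T_{\rm eq}} - O_{-1} = O_{T_{\rm eq}} = \sigma\bigl(X_{T_{\rm eq}}\bigr) = \sigma(x_q) = o_q ,
\]
which is the claim. Nothing in this chain uses any property of $\sigma$ beyond its being a fixed deterministic map of the accumulated input, so the argument covers $\sigma = $ softmax and $\sigma = $ layer-norm simultaneously. The one genuinely non-formal step — and hence the main, though mild, obstacle — is the equilibrium argument of the previous paragraph: one must justify that by step $T_{\rm eq}$ the running input has actually reached its terminal value $x_q$ and then remains there, i.e., that these non-neuron operators ``settle'' exactly when the surrounding ST-BIF\textsuperscript{+} neurons do (so that the finite partial sum up to $T_{\rm eq}$ already equals the full accumulated output). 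Once that is established, the remainder is the telescoping identity above.
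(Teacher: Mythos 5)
Your proposal is correct and follows essentially the same route as the paper's proof: unroll the input accumulator to get $X_{T_{\rm eq}} = \sum_{t=0}^{T_{\rm eq}} x_t = x_q$, telescope the output differences to get $\sum_{t=0}^{T_{\rm eq}} o_t = O_{T_{\rm eq}}$, and chain $o_q = \sigma(x_q) = \sigma(X_{T_{\rm eq}}) = O_{T_{\rm eq}}$. Your added care about the base cases ($X_{-1}=0$, $O_{-1}=0$) and the justification that upstream ST-BIF\textsuperscript{+} neurons cease firing after $T_{\rm eq}$ are minor refinements of the same argument, not a different approach.
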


\begin{proof}
In \cref{eqt:SS_SL_model}, summing up the $X_t$ through time, we have $X_{T_{\rm eq}} = \sum_{t=0}^{T_{\rm eq}} x_t = x_q$. Similarly, summing up the $o_t$ over time, we also have $\sum_{t=0}^{T_{\rm eq}} o_t = O_{T_{\rm eq}}$. Combine the two condition, we have:
\begin{equation}
\begin{aligned}
    &\LHS = o_q = \sigma(x_q) = \sigma(\sum\nolimits_{t=0}^{T_{\rm eq}} x_t) = \\ &\sigma(X_{T_{\rm eq}}) = O_{T_{\rm eq}} = \sum\nolimits_{t=0}^{T_{\rm eq}} o_t = \RHS
\end{aligned}
\end{equation}
Proof complete.
\end{proof}


\section{Implementation Details}

\begin{table}[t]
\caption{The hyperparameter of end-to-end finetuning with ViT-S ReLU on ImageNet. $\ast$:~\cite{loshchilov2017decoupled}. $\star$:~\cite{chen2020generative}. $\dagger$:~\cite{clark2020electra, bao2022beit}. $\ddag$:~\cite{goyal2017accurate}. $\triangledown$ :~\cite{loshchilov2016sgdr}. $\vartriangle$:~\cite{cubuk2020randaugment}. $\blacktriangle$:~\cite{szegedy2016rethinking}. $\blacktriangledown$:~\cite{zhang2017mixup}. $\lozenge$:~\cite{yun2019cutmix}. $\blacklozenge$:~\cite{huang2016deep}.}
\label{tab:vit-s_ft_imagenet} 
\centering
\begin{tabular}{l|l}
config & value~(ViT-S ReLU)) \\ \hline
optimizer & AdamW~$\ast$ \\
optimizer momentum & $\beta_1,\beta_2$ = 0.9, 0.99~$\star$ \\
base learning rate & 1e-4 \\
weight decay & 0.05 \\
layer-wise lr decay~$\dagger$ & 0.65 \\
GPUs & 4 \\
batch\_size & 64/256 \\
warmup epochs~$\ddag$ & 5 \\
training epochs & 100 \\
learning rate schedule & cosine decay~$\triangledown$ \\
distillation weight & 1.0 \\
distillation temp. & 2.0 \\
augmentation & RandomAug~(9, 0.5)~$\vartriangle$ \\
label smoothing~$\blacktriangle$ & 0.1 \\
mixup~$\blacktriangledown$ & 0.8 \\
cutmix~$\lozenge$ & 1.0 \\
drop path~$\blacklozenge$ & 0.1 \\
\end{tabular}
\end{table}

\begin{table}[h]
\caption{The hyperparameter of end-to-end finetuning with ViT-B ReLU on ImageNet.}
\label{tab:vit-b_ft_imagenet} 
\centering
\begin{tabular}{l|l}
config & value~(ViT-B ReLU)) \\ \hline
optimizer & AdamW \\
optimizer momentum & $\beta_1,\beta_2$ = 0.9, 0.99 \\
base learning rate & 1.66e-4 \\
weight decay & 0.05 \\
layer-wise lr decay & 0.65 \\
GPUs & 8 \\
batch\_size & 96/768 \\
warmup epochs & 5 \\
training epochs & 100 \\
learning rate schedule & cosine decay \\
distillation weight & 1.0 \\
distillation temp. & 2.0 \\
augmentation & RandomAug~(9, 0.5) \\
label smoothing & 0.1 \\
mixup & 0.8 \\
cutmix & 1.0 \\
drop path & 0.1 \\
\end{tabular}
\end{table}

\begin{table}[t]
\caption{The hyperparameter of end-to-end finetuning with ViT-L ReLU on ImageNet.}
\label{tab:vit-l_ft_imagenet} 
\centering
\begin{tabular}{l|l}
config & value~(ViT-L ReLU)) \\ \hline
optimizer & AdamW \\
optimizer momentum & $\beta_1,\beta_2$ = 0.9, 0.99 \\
base learning rate & 1.67e-3 \\
weight decay & 0.05 \\
layer-wise lr decay & 0.75 \\
GPUs & 8 \\
batch\_size & 24/192 \\
warmup epochs & 5 \\
training epochs & 50 \\
learning rate schedule & cosine decay \\
distillation weight & 1.0 \\
distillation temp. & 2.0 \\
augmentation & RandomAug~(9, 0.5) \\
label smoothing & 0.1 \\
mixup & 0.8 \\
cutmix & 1.0 \\
drop path & 0.2 \\
\end{tabular}
\end{table}

\begin{table}[t]
\caption{The hyperparameter of Quantization-Aware-Training with ViT-S ReLU on ImageNet.}
\label{tab:vit-s_qat_imagenet} 
\centering
\begin{tabular}{l|l}
config & value~(ViT-S ReLU)) \\ \hline
optimizer & AdamW \\
optimizer momentum & $\beta_1,\beta_2$ = 0.9, 0.99 \\
base learning rate & 1.5e-4 \\
weight decay & 0.05 \\
layer-wise lr decay & 0.65 \\
GPUs & 4 \\
batch\_size & 64/256 \\
warmup epochs & 5 \\
training epochs & 100 \\
learning rate schedule & cosine decay \\
distillation weight & 1.0 \\
distillation temp. & 2.0 \\
augmentation & RandomAug~(9, 0.5) \\
label smoothing & 0.1 \\
mixup & 0.8 \\
cutmix & 1.0 \\
drop path & 0.1 \\
quantization level & 32 \\
\end{tabular}
\end{table}

\begin{table}[t]
\caption{The hyperparameter of Quantization-Aware-Training with ViT-B ReLU on ImageNet.}
\label{tab:vit-b_qat_imagenet} 
\centering
\begin{tabular}{l|l}
config & value~(ViT-B ReLU)) \\ \hline
optimizer & AdamW \\
optimizer momentum & $\beta_1,\beta_2$ = 0.9, 0.99 \\
base learning rate & 1.5e-4 \\
weight decay & 0.001 \\
layer-wise lr decay & 0.65 \\
GPUs & 6 \\
batch\_size & 96/576 \\
warmup epochs & 5 \\
training epochs & 50 \\
learning rate schedule & cosine decay \\
distillation weight & 1.0 \\
distillation temp. & 2.0 \\
augmentation & RandomAug~(9, 0.5) \\
label smoothing & 0.1 \\
mixup & 0.8 \\
cutmix & 1.0 \\
drop path & 0.05 \\
quantization level & 32 \\
\end{tabular}
\end{table}

\begin{table}[t]
\caption{The hyperparameter of Quantization-Aware-Training with ViT-L ReLU on ImageNet.}
\label{tab:vit-l_qat_imagenet} 
\centering
\begin{tabular}{l|l}
config & value~(ViT-L ReLU)) \\ \hline
optimizer & AdamW \\
optimizer momentum & $\beta_1,\beta_2$ = 0.9, 0.99 \\
base learning rate & 1.6e-3 \\
weight decay & 0.0005 \\
layer-wise lr decay & 0.75 \\
GPUs & 8 \\
batch\_size & 20/160 \\
warmup epochs & 5 \\
training epochs & 50 \\
learning rate schedule & cosine decay \\
distillation weight & 1.0 \\
distillation temp. & 2.0 \\
augmentation & RandomAug~(9, 0.5) \\
label smoothing & 0.1 \\
mixup & 0.8 \\
cutmix & 1.0 \\
drop path & 0.05 \\
quantization level & 32 \\
\end{tabular}
\end{table}

\subsection{Implementation Details on ImageNet}

The hyperparameter of our end-to-end finetuning with ViT-S ReLU, ViT-B ReLU and ViT-L ReLU on ImageNet are tabulated in~\cref{tab:vit-s_ft_imagenet}, \cref{tab:vit-b_ft_imagenet} and \cref{tab:vit-l_ft_imagenet}. Then we follow the hyperparameter in~\cref{tab:vit-s_qat_imagenet}, \cref{tab:vit-b_qat_imagenet} and \cref{tab:vit-l_qat_imagenet} to conduct 32-Level Quantization-Aware-Training~(QAT) on ViT-S ReLU, ViT-B ReLU and ViT-L ReLU obtained above, and achieve QViT-S-32Level, QViT-B-32Level and QViT-L-32Level in~\cref{tab:imagenet}, note that we introduce knowledge distillation which is effective on multiple tasks~\cite{zhou2021rethinking, xu2022delving}. Finally we apply our SpikeZIP-TF on QViT-S-32Level, QViT-B-32Level and QViT-L-32Level to achieve corresponding SViT-S-32Level, SViT-B-32Level and SViT-L-32Level in~\cref{tab:imagenet}.

\begin{table}[t]
\caption{The hyperparameter of end-to-end finetuning with ViT-S ReLU on CIFAR10/100.}
\label{tab:vit-s_ft_cifar} 
\centering
\begin{tabular}{l|l}
config & value~(ViT-S ReLU)) \\ \hline
optimizer & AdamW \\
optimizer momentum & $\beta_1,\beta_2$ = 0.9, 0.99 \\
base learning rate & 1e-4 \\
weight decay & 0.05 \\
layer-wise lr decay & 0.65 \\
GPUs & 8 \\
batch\_size & 192/1536 \\
warmup epochs & 5 \\
training epochs & 100 \\
learning rate schedule & cosine decay\\
distillation weight & 1.0 \\
distillation temp. & 2.0 \\
augmentation & RandomAug~(9, 0.5) \\
label smoothing & 0.1 \\
mixup & 0.8 \\
cutmix & 1.0 \\
drop path & 0.1 \\
\end{tabular}
\end{table}

\begin{table}[t]
\caption{The hyperparameter of Quantization-Aware-Training with ViT-S ReLU on CIFAR10/100.}
\label{tab:vit-s_qat_cifar} 
\centering
\begin{tabular}{l|l}
config & value~(ViT-S ReLU)) \\ \hline
optimizer & AdamW \\
optimizer momentum & $\beta_1,\beta_2$ = 0.9, 0.99 \\
base learning rate & 1.5e-4 \\
weight decay & 0.05 \\
layer-wise lr decay & 0.65 \\
GPUs & 8 \\
batch\_size & 128/1024 \\
warmup epochs & 5 \\
training epochs & 300 \\
learning rate schedule & cosine decay \\
distillation weight & 1.0 \\
distillation temp. & 2.0 \\
augmentation & RandomAug~(9, 0.5) \\
label smoothing & 0.1 \\
mixup & 0.8 \\
cutmix & 1.0 \\
drop path & 0.1 \\
quantization level & 8, 16 \\
\end{tabular}
\end{table}

\subsection{Implementation Details on CIFAR10/100}

The hyperparameter of our end-to-end finetuning with ViT-S ReLU on CIFAR10/100 are tabulated in \cref{tab:vit-s_ft_cifar}. Then we follow the hyperparameter in \cref{tab:vit-s_qat_cifar} to conduct 8-Level and 16-Level Quantization-Aware-Training (QAT) on ANN ViT-S ReLU obtained above, and achieve QViT-S-8Level and QViT-S-16Level in \cref{tab:cifar}. Finally we apply our SpikeZIP-TF on QViT-S-8Level and QViT-S-16Level to achieve corresponding 16 time-steps SViT-S and 32 time-steps SViT-S in \cref{tab:cifar}.

\begin{table}[t]
\caption{The hyperparameter of end-to-end finetuning with ViT-S ReLU on CIFAR10-DVS.}
\label{tab:vit-s_ft_cifar10dvs} 
\centering
\begin{tabular}{l|l}
config & value~(ViT-S ReLU)) \\ \hline
optimizer & AdamW \\
optimizer momentum & $\beta_1,\beta_2$ = 0.9, 0.99 \\
base learning rate & 2e-4 \\
weight decay & 0.05 \\
layer-wise lr decay & 0.65 \\
GPUs & 4 \\
batch\_size & 192/768 \\
warmup epochs & 5 \\
training epochs & 300 \\
learning rate schedule & cosine decay\\
distillation weight & 1.0 \\
distillation temp. & 2.0 \\
augmentation & RandomAug~(9, 0.5) \\
label smoothing & 0.1 \\
mixup & 0.8 \\
cutmix & 1.0 \\
drop path & 0.1 \\
\end{tabular}
\end{table}

\begin{table}[t]
\caption{The hyperparameter of Quantization-Aware-Training with ViT-S ReLU on CIFAR10-DVS.}
\label{tab:vit-s_qat_cifar10dvs} 
\centering
\begin{tabular}{l|l}
config & value~(ViT-S ReLU)) \\ \hline
optimizer & AdamW \\
optimizer momentum & $\beta_1,\beta_2$ = 0.9, 0.99 \\
base learning rate & 2.25e-4 \\
weight decay & 0.05 \\
layer-wise lr decay & 0.65 \\
GPUs & 4 \\
batch\_size & 92/368 \\
warmup epochs & 5 \\
training epochs & 300 \\
learning rate schedule & cosine decay \\
distillation weight & 1.0 \\
distillation temp. & 2.0 \\
augmentation & RandomAug~(9, 0.5) \\
label smoothing & 0.1 \\
mixup & 0.8 \\
cutmix & 1.0 \\
drop path & 0.1 \\
quantization level & 16, 32 \\
\end{tabular}
\end{table}

\subsection{Implementation Details on CIFAR10-DVS}

The hyperparameter of our end-to-end finetuning with ViT-S ReLU on CIFAR10-DVS are tabulated in \cref{tab:vit-s_ft_cifar10dvs}. Then we follow the hyperparameter in \cref{tab:vit-s_qat_cifar10dvs} to conduct 16-Level and 32-Level Quantization-Aware-Training (QAT) on ANN ViT-S ReLU obtained above, and achieve QViT-S-16Level and QViT-S-32Level in \cref{tab:cifar}. Finally we apply our SpikeZIP-TF on QViT-S-16Level and QViT-S-32Level to achieve corresponding 32 time-steps SViT-S and 64 time-steps SViT-S in \cref{tab:cifar}.

\end{document}